\documentclass[11pt]{article} 
\usepackage{url}
\usepackage{smile}
\usepackage{graphicx} 
\usepackage{algorithm}
\usepackage{algorithmic}
\usepackage{todonotes}
\usepackage{epstopdf}
\usepackage{wrapfig}
\usepackage[colorlinks, linkcolor=blue, anchorcolor=blue, citecolor=blue]{hyperref}
\usepackage[margin=1in]{geometry}
\usepackage[normalem]{ulem}
\usepackage[export]{adjustbox}
\usepackage{mathtools, cuted}
\usepackage{natbib}
\usepackage{enumitem}
\usepackage{microtype}
\usepackage{graphicx}
\usepackage{booktabs} 
\usepackage{smile}
\usepackage{wrapfig}
\usepackage{bbm}
\usepackage{subfigure}
\usepackage{listings}
\usepackage[T1]{fontenc}

\linespread{1.15}

\usepackage{kpfonts}
\DeclareMathAlphabet{\mathsf}{OT1}{cmss}{m}{n}
\SetMathAlphabet{\mathsf}{bold}{OT1}{cmss}{bx}{n}

\newcommand{\myCOMMENT}[2][.25\linewidth]{%
  \leavevmode\hfill\makebox[#1][l]{\#~#2}}

\providecommand{\norm}[1]{\|#1\|}




\begin{document}

\title{\huge \bf Differentiable Top-$k$ Operator with Optimal Transport}

\author{Yujia Xie, Hanjun Dai, Minshuo Chen, Bo Dai, Tuo Zhao, Hongyuan Zha, \\Wei Wei, Tomas Pfister \thanks{Work done when Yujia was an intern in Google. Yujia Xie, Minshuo Chen, Tuo Zhao are affiliated with Georgia Institute of Technology. Hanjun Dai, Bo Dai, Wei Wei, Tomas Pfister are affiliated with Google Inc. Yujia Xie and Wei Wei are the corresponding authors. Emails: {\tt Xie.Yujia000@gmail.com, wewei@google.com}.}}
\date{}

\maketitle


\begin{abstract}

The top-$k$ operation, i.e., finding the $k$ largest or smallest elements from a collection of scores, is an important model component, which is widely used in information retrieval, machine learning, and data mining.  
However, if the top-$k$ operation is implemented in an algorithmic way, e.g., using bubble algorithm, the resulting model cannot be trained in an end-to-end way using prevalent gradient descent algorithms. This is because these implementations typically involve swapping indices, whose gradient cannot be computed. 
Moreover, the corresponding mapping from the input scores to the indicator vector of whether this element belongs to the top-$k$ set is essentially discontinuous.
To address the issue, we propose a smoothed approximation, namely the SOFT (Scalable Optimal transport-based diFferenTiable) top-$k$ operator.
Specifically, our SOFT top-$k$ operator approximates the output of the top-$k$ operation as the solution of an Entropic Optimal Transport (EOT) problem. 
The gradient of the SOFT operator can then be efficiently approximated based on the optimality conditions of EOT problem.
We apply the proposed operator to the $k$-nearest neighbors and beam search algorithms, and demonstrate improved performance.
\end{abstract}

\section{Introduction}
\label{sec:intro}

The top-$k$ operation, i.e., finding the $k$ largest or smallest elements from a set, is widely used for predictive modeling in information retrieval, machine learning, and data mining.
For example, in image retrieval \citep{babenko2014neural, radenovic2016cnn, gordo2016deep}, one needs to query the $k$ nearest neighbors of an input image under certain metrics;
in the beam search \citep{reddy1977speech, wiseman2016sequence} algorithm for neural machine translation, one needs to find the $k$ sequences of largest likelihoods in each decoding step.

Although the ubiquity of top-$k$ operation continues to grow, the operation itself is difficult to be integrated into the training procedure of a predictive model. For example, we consider a neural network-based $k$-nearest neighbor classifier. Given an input, we use the neural network to extract features from the input. Next, the extracted features are fed into the top-$k$ operation for identifying the $k$ nearest neighbors under some distance metric. We then obtain a prediction based on the $k$ nearest neighbors of the input. In order to train such a model, we choose a proper loss function, and minimize the average loss across training samples using (stochastic) first-order methods. This naturally requires the loss function being differentiable with respect to the input at each update step. Nonetheless, the top-$k$ operation does not exhibit an explicit mathematical formulation: most implementations of the top-$k$ operation, e.g., bubble algorithm and Q{\scriptsize UICKSELECT} \citep{hoare1962quicksort}, involve operations on indices such as indices swapping. Consequently, the training objective is difficult to  formulate explicitly.

%
Alternative perspective --- taking the top-$k$ operation as an operator --- still cannot resolve the differentibility issue. Specifically, the top-$k$ operator\footnote{Throughout the rest of the paper, we refer to the top-$k$ operator as the top-$k$ operation.} maps a set of inputs $x_1, \dots, x_n$ to an index vector $\{0, 1\}^n$. Whereas the Jacobian matrix of such a mapping is not well defined. As a simple example, 
 \begin{figure}
 \vspace{-11pt}
\centering 
     \includegraphics[width=0.26\textwidth]{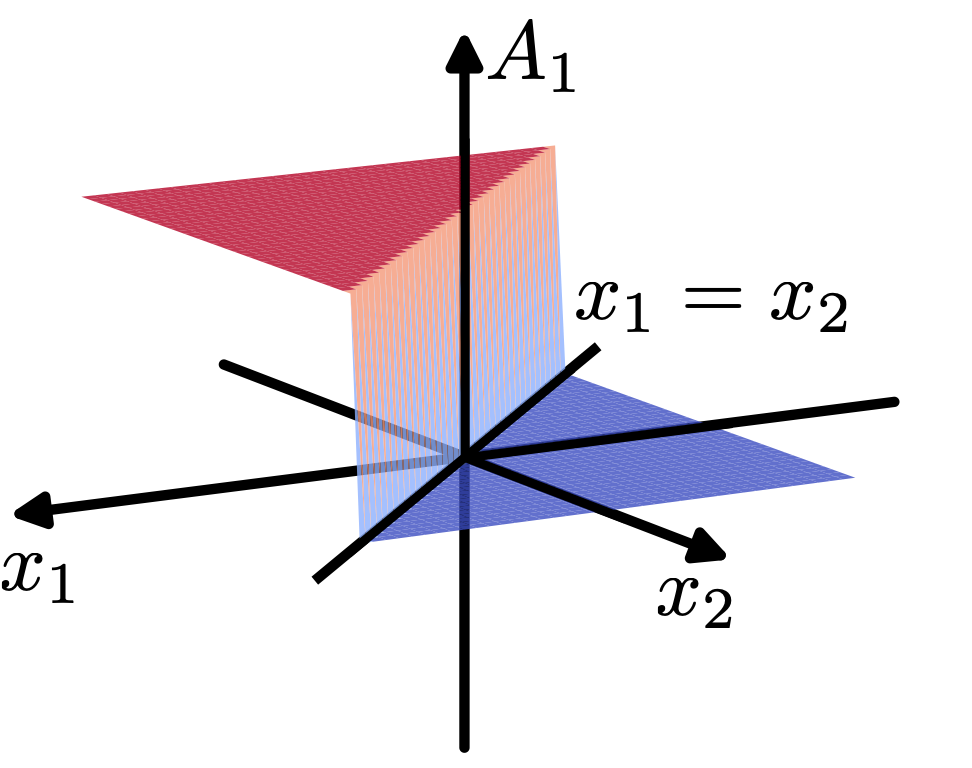}
      \includegraphics[width=0.26\textwidth]{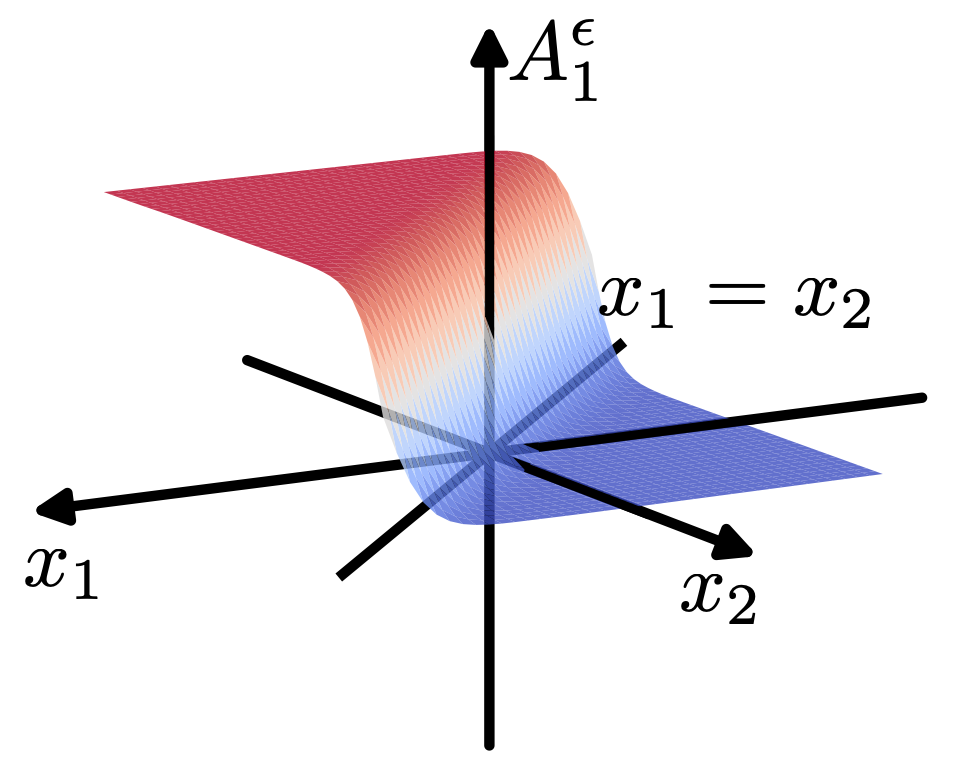}
     \vspace{-13pt}
   \caption{\label{fig:illu_step_function} Indicator vector with respect to input scores. Left: original top-$k$ operator; right: SOFT top-$k$ operator.}
   \vspace{-0.15in}
 \end{figure}
consider two scalars $x_1$, $x_2$. The top-$1$ operation as in Figure \ref{fig:illu_step_function} returns a vector $[A_1, A_2]^\top$, with each entry denoting whether the scalar is the larger one ($1$ for true, $0$ for false). Denote $A_1 = f(x_1, x_2)$. For a fixed $x_2$, $A_1$ jumps from $0$ to $1$ at $x_1 = x_2$. It is clear that $f$ is not differentiable at $x_1 = x_2$, and the derivative is identically zero otherwise.

Due to the aforementioned difficulty, existing works resort to two-stage training for models with the top-$k$ operation. We consider the neural network-based $k$-nearest neighbor classifier again. As proposed in \citet{papernot2018deep}, one first trains the neural network using some surrogate loss on the extracted features, e.g., using softmax activation in the output layer and the cross-entropy loss. Next, one uses the $k$-nearest neighbor for prediction based on the features extracted by the well-trained neural network. This training procedure, although circumventing the top-$k$ operation, makes the training and prediction misaligned; and the actual performance suffers.


In this work, we propose the SOFT (Scalable Optimal transport-based diFferenTiable) top-$k$ operation as a differentiable approximation of the standard top-$k$ operation in Section.~\ref{sec:method}. Specifically, motivated by the implicit differentiation \citep{duchi2008efficient, griewank2008evaluating, amos2017optnet, luise2018differential} techniques, we first parameterize the top-$k$ operation in terms of the optimal solution of an Optimal Transport (OT) problem. Such a re-parameterization is still not differentiable with respect to the input. To rule out the discontinuity, we impose entropy regularization to the optimal transport problem, and show that the optimal solution to the Entropic OT (EOT) problem yields a differentiable approximation to the top-$k$ operation. Moreover, we prove that under mild assumptions, the approximation error can be properly controlled.




We then develop an efficient implementation of the SOFT top-$k$ operation in Section.~\ref{sec:method2}. Specifically, we solve the EOT problem via the Sinkhorn algorithm \citep{cuturi2013sinkhorn}. Given the optimal solution, we can explicitly formulate the gradient of SOFT top-$k$ operation using the KKT (Karush-Kuhn-Tucker) condition. As a result, the gradient at each update step can be efficiently computed with complexity $\cO (n)$, where $n$ is the number of elements in the input set to the top-$k$ operation.




Our proposed SOFT top-$k$ operation allows end-to-end training, and we apply SOFT top-$k$ operation to $k$NN for classification in Section~\ref{sec:app},  beam search in Section~\ref{sec:app2}  and learning sparse attention for neural machine translation in Section~\ref{sec:app3}.
The experimental results demonstrate significant performance gain over competing methods, as an end-to-end training procedure resolves the misalignment between training and prediction. 

\noindent{\bf Notations.} We denote $\norm{\cdot}_2$ as the $\ell_2$ norm of vectors, $\norm{\cdot}_{\textrm{F}}$ as the Frobenius norm of matrices. Given two matrices $B, D\in \mathbb{R}^{n\times m}$, we denote $\langle B, D \rangle$ as the inner product, i.e., $\langle B, D \rangle = \sum_{i=1,j=1}^{n,m} B_{ij}D_{ij}$. We denote $B\odot D$ as the element-wise multiplication of $B$ and $D$. We denote $\mathbbm{1}(\cdot)$ as the indicator function, i.e., the output of $\mathbbm{1}(\cdot)$ is $1$ if the input condition is satisfied, and is $0$ otherwise. For matrix $B\in \mathbb{R}^{n\times m}$, we denote $B_{i,:}$ as the $i$-th row of the matrix. The softmax function for matrix $B$ is defined as ${\rm softmax}_i(B_{ij}) = e^{B_{ij}} / \sum_{\ell=1}^n e^{B_{lj}}$. For a vector $b\in\mathbb{R}^n$, we denote ${\rm diag}(b)$ as the matrix where the $i$-th diagonal entries is $b_i$.


\section{SOFT Top-$k$ Operator}
\label{sec:method}


In this section we derive the proposed SOFT (Scalable Optimal transport-based diFferenTialble) top-$k$ operator.
\vspace{-0.1in}
\subsection{Problem Statement}
\vspace{-0.05in}

Given a set of scalars $\cX = \{x_i\}_{i=1}^n$, the standard top-$k$ operator returns a vector $A = [A_1, \dots, A_n]^\top$, such that 
\begin{align*}
    A_i=\begin{cases}1,\quad \text{if $x_i$ is a top-$k$ element in $\cX$,} \\
    0, \quad\text{otherwise}. \end{cases}
\end{align*}
In this work, our goal is to design a smooth relaxation of the standard top-$k$ operator, whose Jacobian matrix exists and is smooth.
 Without loss of generality, we refer to top-$k$ elements as the \textit{smallest} $k$ elements.

\subsection{Parameterizing Top-$k$ Operator as OT Problem}

We first show that the standard top-$k$ operator can be parameterized in terms of the solution of an Optimal Transport (OT) problem \citep{monge1781memoire, kantorovich1960mathematical}. We briefly introduce OT problems for self-containedness. An OT problem finds a transport plan between two distributions, while the expected cost of the transportation is minimized. We consider two discrete distributions defined on supports $\mathcal{A}=\{a_i\}_{i=1}^n$ and $\mathcal{B}=\{b_j\}_{j=1}^m$, respectively. Denote $\PP(\{a_i\}) = \mu_i$ and $\PP(\{b_j\}) = \nu_j$, and let $\mu = [\mu_1, \dots, \mu_n]^\top$ and $\nu = [\nu_1, \dots, \nu_m]^\top$. We further denote $C \in \RR^{n \times m}$ as the cost matrix with $C_{ij}$ being the cost of transporting mass from $a_i$ to $b_j$. An OT problem can be formulated as
\begin{align}
\label{eq:kanto}
& \Gamma^* = \argmin_{\Gamma \geq 0}  \langle C, \Gamma \rangle, \quad {\rm s.t.,}~~ \Gamma \bm{1}_m = \mu, ~\Gamma^\top \bm{1}_n = \nu,
\end{align}
where $\bm{1}$ denotes a vector of ones. The optimal solution $\Gamma^*$ is referred to as the \textit{optimal transport plan}.

In order to parameterize the top-$k$ operator using the optimal transport plan $\Gamma^*$, we set the support $\cA=\cX$ and $\cB = \{0, 1\}$ in \eqref{eq:kanto}, with $\mu, \nu$ defined as 
\begin{align*}
\mu=\bm{1}_n/n, \quad \nu=[k/n, (n-k)/n]^\top.
\end{align*}
We take the cost to be the squared Euclidean distance, i.e., $C_{i1} = x_i^2$ and $C_{i2} = (x_i - 1)^2$ for $i = 1, \dots, n$. We then establish the relationship between the output $A$ of the top-$k$ operator and $\Gamma^*$.


\begin{proposition}
Consider the setup in the previous paragraph. Without loss of generality, we assume $\cX$ has no duplicates. Then the optimal transport plan $\Gamma^*$ of \eqref{eq:kanto} is
\begin{align}
    &\Gamma^*_{\sigma_i, 1}= \begin{cases}
    1/n, & {\rm ~if~} i\leq k, \\
    0, & {\rm ~if~} k+1 \leq i\leq n.
    \end{cases} \label{eq:gamma1}\\
    &\Gamma^*_{\sigma_i, 2}= \begin{cases}
    0, & {\rm ~if~} i\leq k, \\
    1/n, & {\rm ~if~} k+1 \leq i\leq n,
    \end{cases} \label{eq:gamma2}
\end{align}
with $\sigma$ being the sorting permutation, i.e., $x_{\sigma_1}<x_{\sigma_2}<\cdots<x_{\sigma_n}$. Moreover, we have
\begin{align}
    A=n\Gamma^*\cdot [1,0]^\top.
\end{align}
\end{proposition}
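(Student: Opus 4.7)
The plan is to reduce the two-column OT problem to a one-dimensional linear program on the simplex slab $\{p\in[0,1]^n : \sum_i p_i = k\}$, and then solve that LP by a rearrangement argument.

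First I would change variables from $\Gamma$ to the single ``fractional selection'' vector $p\in\mathbb{R}^n$ defined by $p_i = n\Gamma_{i,1}$. The row constraint $\Gamma\bm{1}_2 = \mu = \bm{1}_n/n$ forces $\Gamma_{i,2} = 1/n - \Gamma_{i,1}$, which together with $\Gamma\geq 0$ gives $p_i\in[0,1]$. The column constraint $\Gamma^\top\bm{1}_n = \nu$ reduces to the single scalar equation $\sum_i p_i = k$ (the second component is automatic once the first holds). So feasibility of $\Gamma$ is equivalent to $p\in[0,1]^n$ with $\sum_i p_i = k$.

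Next I would rewrite the objective. Substituting the expressions for $\Gamma_{i,1}$ and $\Gamma_{i,2}$ into $\langle C,\Gamma\rangle$ yields
\begin{align*}
\langle C,\Gamma\rangle
 = \frac{1}{n}\sum_{i=1}^n\bigl[(x_i-1)^2 + p_i\bigl(x_i^2-(x_i-1)^2\bigr)\bigr]
 = \frac{1}{n}\sum_{i=1}^n(x_i-1)^2 + \frac{1}{n}\sum_{i=1}^n p_i(2x_i-1).
\end{align*}
The first sum is a constant independent of $p$, so minimising $\langle C,\Gamma\rangle$ over feasible $\Gamma$ is equivalent to minimising $\sum_i p_i(2x_i-1)$ over $p\in[0,1]^n$ with $\sum_i p_i = k$.

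This reduced problem is a linear program on a hypercube intersected with a hyperplane; because the coefficients $2x_{\sigma_1}-1 < \cdots < 2x_{\sigma_n}-1$ are strictly ordered (by the no-duplicates assumption) and $k$ is an integer, a standard exchange/rearrangement argument shows the unique minimiser is $p_{\sigma_i}=1$ for $i\leq k$ and $p_{\sigma_i}=0$ for $i>k$: any other feasible $p$ can be strictly improved by moving mass from some $\sigma_j$ with $j>k$ and $p_{\sigma_j}>0$ to some $\sigma_i$ with $i\leq k$ and $p_{\sigma_i}<1$, which decreases the objective because $2x_{\sigma_i}-1 < 2x_{\sigma_j}-1$. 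Translating back via $\Gamma^*_{i,1}=p_i/n$ and $\Gamma^*_{i,2}=(1-p_i)/n$ gives \eqref{eq:gamma1}--\eqref{eq:gamma2}. Finally, $n\Gamma^*\cdot[1,0]^\top$ has $\sigma_i$-th entry equal to $np_{\sigma_i}/n = p_{\sigma_i}$, which is $1$ exactly when $\sigma_i$ is among the $k$ smallest elements of $\cX$, i.e.\ equal to $A_{\sigma_i}$, proving $A = n\Gamma^*\cdot[1,0]^\top$.

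The only part requiring care is the uniqueness / optimality step for the reduced LP; everything else is bookkeeping. Since the feasible set is a (bounded) polytope and the objective is linear, an optimum is attained at a vertex, and the vertices of $\{p\in[0,1]^n:\sum p_i=k\}$ with integer $k$ are exactly the $\{0,1\}$-vectors with $k$ ones; the rearrangement inequality then selects the vertex indexed by the $k$ smallest $x_i$. I do not expect any genuine obstacle.
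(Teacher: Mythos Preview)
Your proposal is correct and follows essentially the same route as the paper: both use the row constraint $\Gamma_{i,1}+\Gamma_{i,2}=1/n$ to eliminate one column and reduce to a linear program on a capped simplex, then solve that LP by a greedy/rearrangement argument. The only cosmetic difference is that the paper parameterizes by the second column (showing the problem is equivalent to maximizing $\sum_i x_i\Gamma_{i,2}$ with $\sum_i\Gamma_{i,2}=(n-k)/n$ and $\Gamma_{i,2}\le 1/n$), whereas you parameterize by the first column via $p_i=n\Gamma_{i,1}$; your treatment of uniqueness via the vertex structure of $\{p\in[0,1]^n:\sum p_i=k\}$ is slightly more explicit than the paper's, but the substance is identical.
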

\begin{proof}
We expand the objective function of \eqref{eq:kanto} as
\begin{align*}
    \langle C, \Gamma \rangle = & \sum_{i=1}^n \Big( (x_i-0)^2  \Gamma_{i,1} + (x_i-1)^2 \Gamma_{i,2} \Big)\\
    = & \sum_{i=1}^n \Big( x_i^2 (\Gamma_{i,1} + \Gamma_{i,2}) + \Gamma_{i,2} - 2x_i \Gamma_{i,2}\Big)\\
    = & \frac{1}{n} \sum_{i=1}^n x_i^2 + \frac{n-k}{n} -2 \sum_{i=1}^n x_i \Gamma_{i,2}.
\end{align*}
Therefore, to minimize $\langle C, \Gamma \rangle$, it suffices to maximize $\sum_{i=1}^n x_i \Gamma_{i,2}$. It is straightforward to check $$\sum_{i=1}^n \Gamma_{i, 2} = \frac{n-k}{n} \quad \textrm{and} \quad \Gamma_{i, 2} \leq  \frac{1}{n}$$ for any $i = 1, \dots, n$. Hence, maximizing $\sum_{i=1}^n x_i \Gamma_{i,2}$ is essentially selecting the largest $n-K$ elements from $\cX$, and the maximum is attained at 
\begin{align*}
    \Gamma^*_{\sigma_i,2}= \begin{cases}
    0, & {\rm ~if~} i\leq k, \\
    1/n, & {\rm ~if~} k+1 \leq i\leq n.
    \end{cases}
\end{align*}
The constraint $\Gamma \bm{1}_m = \mu$ further implies that $\Gamma^*_{i, 1}$ satisfies \eqref{eq:gamma1}. Thus, $A$ can be parameterized as $A=n\Gamma^*\cdot [1,0]^\top$.
\end{proof}

\begin{figure}
    \centering
    \includegraphics[width=0.55\linewidth]{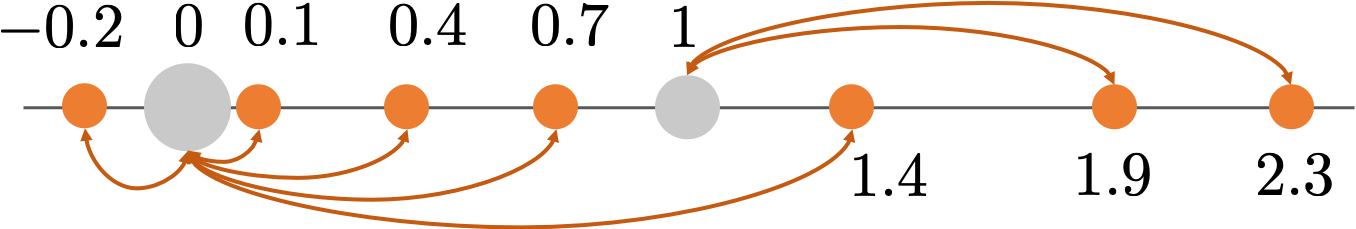}
    \caption{\label{fig:illu} Illustration of the optimal transport plan with input $\cX=[0.4,0.7,2.3,1.9,-0.2,1.4,0.1]^\top$ and $k=5$. Here, we set $\nu=[\frac{5}{7}, \frac{2}{7}]^\top$. In this way, $5$ of the $7$ scores, i.e., $\{0.4,0.7,-0.2,1.4,0.1\}$, would align with $0$, while $\{2.3, 1.9\}$ align with $1$. }
\end{figure}

Figure \ref{fig:illu} illustrates the corresponding optimal transport plan for parameterizing the top-$5$ operator applied to a set of $7$ elements. As can be seen, the mass from the $5$ closest points is transported to $0$, and meanwhile the mass from the $2$ remaining points is transported to $1$. Therefore, the optimal transport plan exactly indicates the top-$5$ elements.

\subsection{Smoothing by Entropy Regularization}

We next rule out the discontinuity of \eqref{eq:kanto} to obtain a smoothed approximation to the standard top-$k$ operator.



Specifically, we employ entropy regularization to the OT problem \eqref{eq:kanto}:
\begin{align}
\label{eq:reg_ot}
 \Gamma^{*, \epsilon} = \argmin_{\Gamma \geq 0}  \langle C, \Gamma \rangle + \epsilon H(\Gamma), \quad{\rm s.t.,}\quad \Gamma \bm{1}_m = \mu, \Gamma^\top \bm{1}_n = \nu, 
\end{align}
where $h(\Gamma)=\sum_{i,j}\Gamma_{ij}\log \Gamma_{ij}$ is the entropy regularizer. We define $A^{\epsilon} = n \Gamma^{*, \epsilon} \cdot [0, 1]^\top$ as a smoothed counterpart of output $A$ in the standard top-$k$ operator. Accordingly, SOFT top-$k$ operator is defined as the mapping from $\cX$ to $A^\epsilon$. We show that the Jacobian matrix of SOFT top-$k$ operator exists and is nonzero in the following theorem.  


\begin{figure}[t]
\centering     
\subfigure[$\epsilon=10^{-3}$]{\label{fig:a}\includegraphics[height=0.13\textwidth]{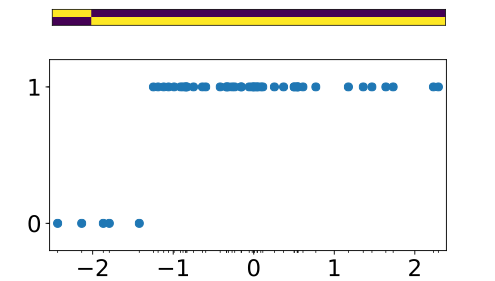}}
\subfigure[$\epsilon=5\times10^{-3}$]{\label{fig:b}\includegraphics[height=0.13\textwidth]{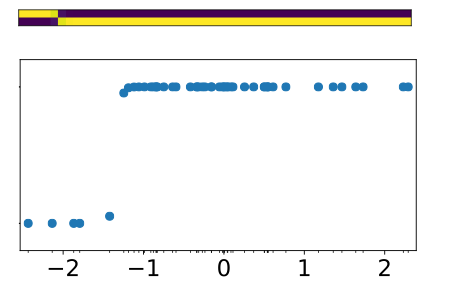}}
\subfigure[$\epsilon=10^{-2}$]{\label{fig:a}\includegraphics[height=0.13\textwidth]{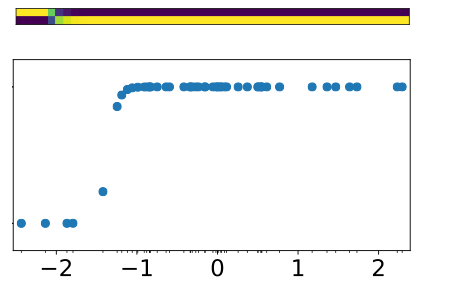}}
\subfigure[$\epsilon=5\times10^{-2}$]{\label{fig:b}\includegraphics[height=0.13\textwidth]{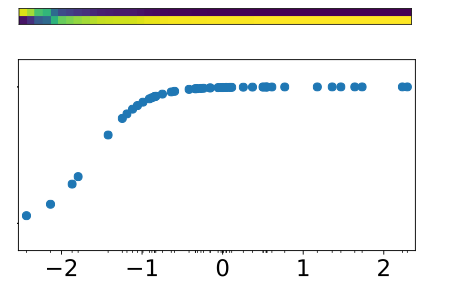}}
\subfigure{\label{fig:b}\includegraphics[height=0.14\textwidth]{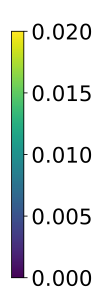}}
\caption{Color maps of $\Gamma^{\epsilon}$ (upper) and the corresponding scatter plots of values in $A^{\epsilon}$ (lower), where $\mathcal{X}$ contains $50$ standard Gaussian samples, and $K=5$. The scatter plots show the correspondence of the input $\mathcal{X}$ and output $A^{\epsilon}$.}
\end{figure}

\begin{theorem}
For any $\epsilon>0$, SOFT top-$k$ operator: $\cX \mapsto A^\epsilon$ is differentiable, as long as the cost $C_{ij}$ is differentiable with respect to $x_i$ for any $i, j$. Moreover, the Jacobian matrix of SOFT top-$k$ operator always has a nonzero entry for any $\cX \in\mathbb{R}^n$.
\end{theorem}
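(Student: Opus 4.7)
The plan is to use the KKT conditions of the entropic OT problem~\eqref{eq:reg_ot}, which yield an explicit Sinkhorn-style parameterization of $\Gamma^{*,\epsilon}$, and then to combine this with the implicit function theorem for the differentiability claim and with a direct partial-derivative computation for the non-vanishing claim. Strict convexity of the regularized objective together with the blow-up of $-\log\Gamma_{ij}$ at the boundary forces $\Gamma^{*,\epsilon}$ to be the unique optimum with every entry strictly positive, so KKT with multipliers $\alpha\in\mathbb{R}^n$ and $\beta\in\mathbb{R}^m$ for the two marginal constraints gives
\begin{align*}
    \Gamma^{*,\epsilon}_{ij} \;=\; \exp\!\Bigl(\tfrac{\alpha_i + \beta_j - C_{ij}}{\epsilon} - 1\Bigr).
\end{align*}
This is an explicit smooth function of $(\alpha,\beta,C)$, so the only remaining task for differentiability is to show that $(\alpha,\beta)$ depends smoothly on $C$.

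For differentiability, after fixing the one-parameter gauge $\alpha\mapsto\alpha+c,\;\beta\mapsto\beta-c$ by pinning $\beta_2=0$, the marginal constraints $\Gamma^{*,\epsilon}\bm{1}_m = \mu$ and $(\Gamma^{*,\epsilon})^\top\bm{1}_n = \nu$ form a smooth square system in the remaining duals. Its Jacobian in those duals is, up to the factor $1/\epsilon$, a positive definite matrix (the standard non-singularity of the Sinkhorn fixed point after gauge fixing, which follows from strict concavity of the regularized dual). The implicit function theorem therefore expresses $(\alpha,\beta)$ as a smooth function of $C$, and composing with the hypothesis that each $C_{ij}$ is differentiable in $\cX$ gives differentiability of $A^{\epsilon} = n\Gamma^{*,\epsilon}\cdot[0,1]^\top$ in $\cX$.

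For the non-vanishing Jacobian entry I would specialize to the cost $C_{i1}=x_i^2$, $C_{i2}=(x_i-1)^2$ used to define SOFT top-$k$. Eliminating $\alpha_i$ from the Sinkhorn form via the row constraint $\Gamma^{*,\epsilon}_{i,1}+\Gamma^{*,\epsilon}_{i,2}=1/n$ and writing $g(y):=1/(1+e^{-y})$ and $\rho:=e^{(\beta_1-\beta_2)/\epsilon}$, the algebra collapses to
\begin{align*}
    A^{\epsilon}_i \;=\; g(y_i), \qquad y_i \;:=\; \tfrac{2x_i-1}{\epsilon} - \log\rho,
\end{align*}
where $\rho$ is the unique scalar pinned down by the single remaining column constraint $\sum_i g(y_i) = n-k$. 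Implicitly differentiating this constraint in $x_i$ gives $\partial\log\rho/\partial x_i = 2g'(y_i)/\bigl(\epsilon\sum_j g'(y_j)\bigr)$, and substituting back yields
\begin{align*}
    \frac{\partial A^{\epsilon}_i}{\partial x_i} \;=\; \frac{2g'(y_i)}{\epsilon}\cdot\frac{\sum_{j\neq i} g'(y_j)}{\sum_j g'(y_j)} \;>\; 0
\end{align*}
for every $i$ and every $\cX\in\mathbb{R}^n$, since $g'>0$ everywhere and $n\geq 2$; so the Jacobian has a (strictly positive) nonzero entry. The main obstacle I anticipate is the gauge degeneracy in the first step: the map from duals to marginals has a one-dimensional kernel along $(\alpha,\beta)\mapsto(\alpha+c,\beta-c)$, so the implicit function theorem cannot be applied naively and one must quotient the gauge out cleanly before invoking non-singularity; once that is done the rest of the argument is routine algebra.
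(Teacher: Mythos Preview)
Your proposal is correct. For the differentiability claim it follows essentially the same route as the paper: write $\Gamma^{*,\epsilon}$ in Sinkhorn form via KKT and invoke the implicit function theorem on the dual problem, using strict concavity of the regularized dual to get non-singularity. You are in fact more careful than the paper about the one-dimensional gauge degeneracy, which the paper handles only later (in its gradient derivation) by dropping the last dual coordinate.

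For the non-vanishing Jacobian entry, your argument is genuinely different from the paper's. The paper argues by contradiction at the level of the KKT stationarity equations: if $\Gamma^{*,\epsilon}$ did not move under an infinitesimal change of $x_\ell$, the perturbed linear system $\xi_i+\zeta_j=C_{ij}-\epsilon\log\Gamma^{*,\epsilon}_{ij}$ would become inconsistent (the row-$\ell$ equations force a shift in $\zeta_1-\zeta_2$ that the other rows forbid), hence some entry of $d\Gamma^{*,\epsilon}/dx_\ell$ is nonzero, and the row constraint transfers this to a nonzero entry of $dA^\epsilon/dx_\ell$. Your route instead exploits the two-column structure to eliminate the $\alpha_i$'s, reduce to a single scalar unknown $\rho$, and obtain the closed form $A^\epsilon_i=g\bigl((2x_i-1)/\epsilon-\log\rho\bigr)$; implicit differentiation then yields an explicit positive expression for every diagonal entry $\partial A^\epsilon_i/\partial x_i$. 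Your approach buys strictly more information (an exact formula, and positivity of \emph{all} diagonal entries rather than existence of \emph{some} nonzero entry), at the price of being tied to the specific squared-Euclidean cost and $m=2$ target used by SOFT top-$k$; the paper's contradiction argument, by contrast, goes through for any differentiable cost with $\partial(C_{\ell,1}-C_{\ell,2})/\partial x_\ell\neq 0$.
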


The proof can be found in Appendix \ref{sec:theorem_proof}. We remark that the entropic OT \eqref{eq:reg_ot} is computationally more friendly, since it allows the usage of first-order algorithms \citep{cuturi2013sinkhorn}. 


The Entropic OT introduces bias to the SOFT top-$k$ operator. The following theorem shows that such a bias can be effectively controlled.

\begin{theorem}
Given a distinct sequence $\cX$ and its sorting permutation $\sigma$, with Euclidean square cost function, for the proposed top-$k$ solver we have 
\begin{align*}
    \|\Gamma^{*, \epsilon} - \Gamma^{*}\|_{\textrm{F}} \leq \frac{\epsilon (\ln n + \ln 2)}{n(x_{\sigma_{k+1}} -x_{\sigma_{k}}) }.
\end{align*}
\end{theorem}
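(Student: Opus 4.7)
The plan is to combine the optimality of $\Gamma^{*,\epsilon}$ for the regularized problem with a one-dimensional gap argument that exploits the very special structure of the cost. Starting from optimality,
\begin{align*}
\langle C, \Gamma^{*,\epsilon}\rangle + \epsilon H(\Gamma^{*,\epsilon}) \leq \langle C, \Gamma^{*}\rangle + \epsilon H(\Gamma^{*}),
\end{align*}
rearrangement gives $\langle C, \Gamma^{*,\epsilon} - \Gamma^{*}\rangle \leq \epsilon \big(H(\Gamma^{*}) - H(\Gamma^{*,\epsilon})\big)$. Since $\Gamma^{*}$ has exactly $n$ nonzero entries each equal to $1/n$, direct computation yields $H(\Gamma^{*}) = -\ln n \leq 0$; since $\Gamma^{*,\epsilon}$ is a probability distribution over a $2n$-element support, the maximum-entropy bound yields $H(\Gamma^{*,\epsilon}) \geq -\ln(2n)$. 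Using $H(\Gamma^{*}) \leq 0$ loosely to keep the estimate simple, the entropy gap is at most $\ln(2n) = \ln n + \ln 2$.

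For the left-hand side, I would reuse the expansion established in the proposition's proof, $\langle C, \Gamma\rangle = \frac{1}{n}\sum_i x_i^2 + \frac{n-k}{n} - 2\sum_i x_i \Gamma_{i,2}$. The two constant terms cancel in the cost difference between any two feasible plans, so
\begin{align*}
\langle C, \Gamma^{*,\epsilon} - \Gamma^{*}\rangle = 2 \sum_{i=1}^n x_i\big(\Gamma^{*}_{i,2} - \Gamma^{*,\epsilon}_{i,2}\big).
\end{align*}
Thus only the second column of $\Gamma^{*,\epsilon} - \Gamma^*$ enters the bound; the first column will be recovered afterwards from the row constraint $\Gamma \mathbf{1}_m = \mu$.

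The main step is the gap argument. Split the indices into $P = \{i : \Gamma^{*}_{i,2} > \Gamma^{*,\epsilon}_{i,2}\}$ and $N = \{i : \Gamma^{*}_{i,2} < \Gamma^{*,\epsilon}_{i,2}\}$. Because $\Gamma^{*}_{i,2} \in \{0, 1/n\}$ and feasibility forces $\Gamma^{*,\epsilon}_{i,2} \in [0, 1/n]$, every $i \in P$ must lie among the top $n-k$ indices (so $x_i \geq x_{\sigma_{k+1}}$) and every $i \in N$ among the bottom $k$ indices (so $x_i \leq x_{\sigma_k}$). The column marginal $\sum_i \Gamma_{i,2} = (n-k)/n$ forces the positive and negative mass to cancel, each piece being $\frac{1}{2}\|\Gamma^{*}_{\cdot,2} - \Gamma^{*,\epsilon}_{\cdot,2}\|_1$, so separating the weighted sum over $P$ and $N$ gives
\begin{align*}
2\sum_i x_i\big(\Gamma^{*}_{i,2} - \Gamma^{*,\epsilon}_{i,2}\big) \geq \big(x_{\sigma_{k+1}} - x_{\sigma_k}\big)\,\big\|\Gamma^{*}_{\cdot,2} - \Gamma^{*,\epsilon}_{\cdot,2}\big\|_1.
\end{align*}

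Chaining the three inequalities yields $\|\Gamma^{*}_{\cdot,2} - \Gamma^{*,\epsilon}_{\cdot,2}\|_1 \leq \epsilon(\ln n + \ln 2)/(x_{\sigma_{k+1}} - x_{\sigma_k})$. The row constraint gives $(\Gamma^{*} - \Gamma^{*,\epsilon})_{i,1} = -(\Gamma^{*} - \Gamma^{*,\epsilon})_{i,2}$, so the $\ell^1$ bound on column $2$ upgrades to the whole matrix, and the pointwise ceiling $|\Gamma^{*}_{ij} - \Gamma^{*,\epsilon}_{ij}| \leq 1/n$ (from feasibility) lets me convert the $\ell^1$ bound into the stated Frobenius bound via the elementary inequality $\|\Delta\|_F^2 \leq \|\Delta\|_\infty \|\Delta\|_1$. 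I expect the gap step to be the main obstacle: identifying the sign pattern of $\Gamma^{*} - \Gamma^{*,\epsilon}$ on column $2$ sharply enough to pull out the factor $x_{\sigma_{k+1}} - x_{\sigma_k}$ requires carefully combining the exact form of $\Gamma^{*}$ with the feasibility of $\Gamma^{*,\epsilon}$; the remaining steps are an entropy computation, a cost-expansion substitution, and routine norm inequalities.
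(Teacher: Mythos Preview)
Your approach is genuinely different from the paper's. The paper splits the result into two lemmas: a general polytope estimate (its Lemma~1) showing that for any entropic OT problem $\|\Gamma^{*,\epsilon}-\Gamma^*\|_F\le\epsilon(\ln n+\ln m)/B$ with $B=\min_{j\ne 0}\langle C,e_j-e_0\rangle/\|e_j-e_0\|_F$ taken over the vertices $e_j$ of the feasible polytope, and then a problem-specific computation (its Lemma~3) evaluating $B$ by enumerating those vertices. The Frobenius norm enters \emph{directly} in the paper's argument, through $\langle C,\Delta\rangle=\|C\|_F\|\Delta\|_F\cos\theta$ together with the observation that the cosine of the angle between $C$ and any convex combination of the vectors $e_j-e_0$ is at least $\min_j\cos\theta_{(C,e_j-e_0)}$. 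Your route --- the cost expansion from the proposition, then a sign-pattern gap argument on column~2 --- is more elementary and avoids the vertex enumeration entirely, but it naturally lands on an $\ell^1$ bound rather than a Frobenius one.

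That is where your proposal breaks. The conversion $\|\Delta\|_F^2\le\|\Delta\|_\infty\|\Delta\|_1$ with $\|\Delta\|_\infty\le 1/n$ and $\|\Delta\|_1=2\|d\|_1$ yields
\[
\|\Delta\|_F\ \le\ \sqrt{\frac{2}{n}\cdot\frac{\epsilon(\ln n+\ln 2)}{x_{\sigma_{k+1}}-x_{\sigma_k}}},
\]
which is $O(\sqrt\epsilon)$, not the stated $O(\epsilon)$ bound; for small $\epsilon$ this is strictly weaker than the claim. The gap argument itself is clean and correct --- contrary to your expectation, identifying the sign pattern is not the obstacle. What fails is the final norm conversion: an $\ell^1$ bound on $d$ combined with $\|d\|_\infty\le 1/n$ simply cannot produce a Frobenius bound that is both linear in $\epsilon$ and carries the extra factor $1/n$. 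To recover that, one must lower-bound $\langle C,\Delta\rangle$ directly in terms of $\|\Delta\|_F$ rather than $\|d\|_1$, and that is precisely what the paper's vertex/angle machinery is for. (As a side remark, your $\ell^1$ estimate already gives, via $\|d\|_2\le\|d\|_1$, the linear-in-$\epsilon$ bound $\|\Delta\|_F\le\sqrt 2\,\epsilon(\ln n+\ln 2)/(x_{\sigma_{k+1}}-x_{\sigma_k})$ --- just without the factor $n$ in the denominator.)
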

Therefore, with a small enough $\epsilon$, the output vector $A^{\epsilon}$ can well approximate $A$, especially when there is a large gap between $x_{\sigma_{k}}$ and $x_{\sigma_{k+1}}$. Besides, Theorem \ref{thm:lm2} suggests a trade-off between the bias and regularization of SOFT top-$k$ operator. See Section \ref{sec:diss} for a detailed discussion.


\subsection{Sorted SOFT Top-$k$ Operator}\label{sorted-top-k-soft}

In some applications like beam search, we not only need to distinguish the top-$k$ elements, but also sort the top-$k$ elements. For example, in image retrieval \citep{gordo2016deep}, the retrieved $k$ images are expected to be sorted. We show that our proposed SOFT top-$k$ operator can be extended to the sorted SOFT top-$k$ operator.

\begin{figure}
    \centering
    \includegraphics[width=0.55\linewidth]{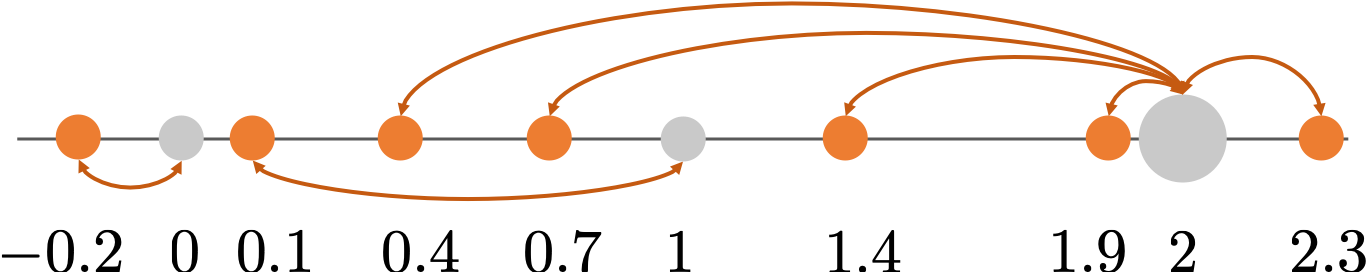}
    \caption{\label{fig:illu_sorted} Illustration of the optimal transport plan for sorted top-$k$ with input $\cX=[0.4,0.7,2.3,1.9,-0.2,1.4,0.1]^\top$ and $K=2$. Here, we set $\nu=[\frac{1}{7}, \frac{1}{7}, \frac{5}{7}]^\top$ and $\cB=[0,1,2]^\top$. In this way, the smallest score $-0.2$ aligns with $0$, the second smallest score $0.1$ aligns with $1$, and the rest of the scores align with $2$. }
\end{figure}

Analogous to the derivation of the SOFT top-$k$ operator, we first parameterize the sorted top-$k$ operator in terms of an OT problem. Specifically, we keep $\cA = \cX$ and $\mu = \bm{1}_n / n$ and set 
\begin{align*}
     \cB = [0,1,2, \cdots, k]^\top,\quad \textrm{and}  \quad \nu = [1/n, \cdots, 1/n, (n-k)/n]^\top.
\end{align*}

One can check that the optimal transport plan of the above OT problem transports the smallest element in $\cA$ to $0$ in $\cB$, the second smallest element to $1$, and so on so forth. This in turn yields the sorted top-$k$ elements. Figure \ref{fig:illu_sorted} illustrates the sorted top-$2$ operator and its corresponding optimal transport plan.

The sorted SOFT top-$k$ operator is obtained similarly to SOFT top-$k$ operator by solving the entropy regularized OT problem. We can show that the sorted SOFT top-$k$ operator is differentiable and the bias can be properly controlled.


\section{Efficient Implementation}
\label{sec:method2}

We now present our implementation of SOFT top-$k$ operator, which consists of 1) computing $A^\epsilon$ from $\cX$ and 2) computing the Jacobian matrix of $A^\epsilon$ with respect to $\cX$. We refer to 1) as the forward pass and 2) as the backward pass. 

\noindent{\bf Forward Pass.} The forward pass from $\cX$ to $A^\epsilon$ can be efficiently computed using Sinkhorn algorithm. Specifically, we 
run iterative Bregman projections \citep{benamou2015iterative}, where at the $\ell$-th iteration, we update
 \[
 p^{(\ell+1)} = \frac{\mu}{Gq^{(\ell)}}, \quad q^{(\ell+1)} = \frac{\nu}{G^\top p^{(\ell+1)}}.
 \]
Here the division is entrywise, $q^{(0)} = \bm{1}_2/2$, and $G \in \RR^{n \times m}$ with $G_{ij} = e^{-\frac{C_{ij}}{\epsilon}}$. Denote $p^*$ and $q^*$ as the stationary point of the Bregman projections. The optimal transport plan $\Gamma^{*, \epsilon}$ can be otained by $\Gamma^{*, \epsilon}_{ij} = p_i^* G_{ij} q_j^*$.
The algorithm is summarized in Algorithm \ref{alg:topk}.

\begin{algorithm}
\caption{\label{alg:topk} SOFT Top-$k$}
\begin{algorithmic} 
\REQUIRE $\cX = [x_i]_{i=1}^n, k, \epsilon, L$
\STATE $\cY = [y_1, y_2]^\top = [0, 1]^\top$
\STATE $\mu = \bm{1}_n/n, \nu = [k/n, (n-K)/n]^\top$
\STATE  $C_{ij} = |x_i-y_j|^2, G_{ij} = e^{-\frac{C_{ij}}{\epsilon}}, q = \bm{1}_2/2$
\FOR{$l = 1, \cdots, L$}
\STATE $p = {\mu}/(Gq), q = \nu/(G^\top p) $
\ENDFOR
\STATE $\Gamma = {\rm diag}(p)\odot G \odot {\rm diag}(q)$
\STATE $A^\epsilon = n \Gamma \cdot [0,1]^\top $
\end{algorithmic}
\end{algorithm}

\noindent{\bf Backward Pass.} Given $A^{\epsilon}$, we compute the Jacobian matrix $\frac{d A^{\epsilon}}{d \cX}$ using implicit differentiation and differentiable programming techinques. Specifically, the Lagrangian function of Problem \eqref{eq:reg_ot} is
\begin{align*}
    \mathcal{L} = \langle C, \Gamma \rangle - \xi^\top (\Gamma \bm{1}_m - \mu) - \zeta^\top (\Gamma^\top \bm{1}_n - \nu) +\epsilon H(\Gamma),
\end{align*}
where $\xi$ and $\zeta$ are dual variables.
The KKT condition implies that the optimal solution $\Gamma^{*, \epsilon}$ can be formulated using the optimal dual variables $\xi^*$ and $\zeta^*$ as (Sinkhorn’s scaling theorem, \citet{sinkhorn1967concerning}), 
\begin{align} \label{eq:pri_dual}
    \Gamma^{*, \epsilon} = {\rm diag}(e^{\frac{\xi^*}{\epsilon}})e^{-\frac{C}{\epsilon}}{\rm diag}(e^{\frac{\zeta^*}{\epsilon}}).
\end{align}
Substituting \eqref{eq:pri_dual} into the Lagrangian function, we obtain 
\begin{align*}
    \mathcal{L}(\xi^*,\zeta^*; C) = (\xi^*)^\top \mu + (\zeta^*)^\top \nu -\epsilon \sum_{i,j=1}^{n,m} e^{-\frac{C_{ij}-\xi^*_i-\zeta^*_j}{\epsilon}}.
\end{align*}
We now compute the gradient of $\xi^*$ and $\zeta^*$ with respect to $C$, such that we can obtain $d\Gamma^{*, \epsilon}/dC$ by the chain rule applied to \eqref{eq:pri_dual}.
Denote $\omega^* = [(\xi^*)^\top, (\zeta^*)^\top]^\top$, and $\phi(\omega^*; C) = {\partial \mathcal{L}(\omega^*; C)}/{\partial \omega^*}$.
At the optimal dual variable $\omega^*$, the KKT condition immediately yields
\begin{align*}
    \phi(\omega^*; C) \equiv 0.
\end{align*}
By the chain rule, we have
\begin{align*}
    \frac{d \phi(\omega^*; C)}{d C} = \frac{\partial \phi(\omega^*; C)}{\partial C} + \frac{\partial \phi(\omega^*; C)}{\partial \omega^*} \frac{d\omega^*}{d C} = 0.
\end{align*}
Rerranging terms, we obtain 
\begin{align} \label{eq:omega_C}
    \frac{d\omega^*}{d C} = - \left(\frac{\partial \phi(\omega^*; C)}{\partial \omega^*}\right)^{-1} \frac{\partial \phi(\omega^*; C)}{\partial C}.
\end{align}
Combining \eqref{eq:pri_dual}, \eqref{eq:omega_C}, $C_{ij}=(x_i-y_j)^2$, and $A^\epsilon =n\Gamma^{*, \epsilon}\cdot [1,0]^\top$, the Jacobian matrix $dA^\epsilon /d\cX$ can then be derived using the chain rule again.

The detailed derivation and the corresponding algorithm for computing the Jacobian matrix can be found in Appendix \ref{sec:dev_gradient}. The time and space complexity of the derived algorithm is $\cO(n)$ and $\cO(kn)$ for top-$k$ and sorted top-$k$ operators, respectively. We also include a Pytorch \citep{paszke2017automatic} implementation of the forward and backward pass in Appendix \ref{sec:dev_gradient} by extending the \texttt{autograd} automatic differentiation package. 




\section{$k$-NN for Image Classification}
\label{sec:app}







\begin{figure}
    \centering
    \includegraphics[width=0.6\linewidth]{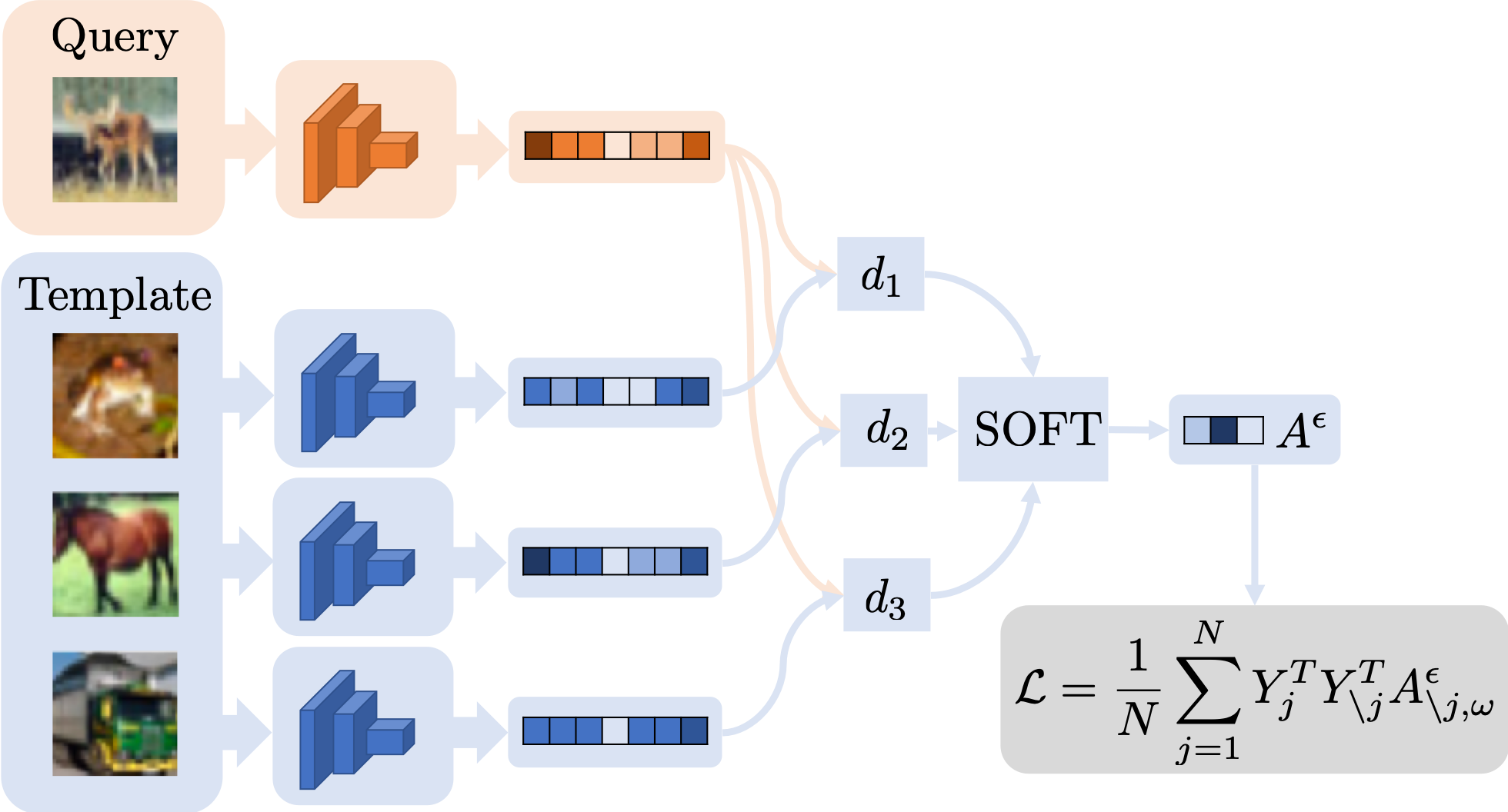}
    \caption{\label{fig:knn} Illustration of the entire forward pass of $k$NN.}
\end{figure}

The proposed SOFT top-$k$ operator enables us to train an end-to-end neural network-based $k$NN classifier. Specifically, we receive training samples $\{Z_i, y_i\}_{i=1}^N$ with $Z_i$ being the input data and $y_i \in \{1, \dots, M\}$ the label from $M$ classes. During the training, for an input data $Z_j$ (also known as the query sample), we associate a loss as follows. Denote $Z_{\setminus j}$ as all the input data excluding $Z_j$ (also known as the template samples). We use a neural network to extract features from all the input data, and measure the pairwise Euclidean distances between the extracted features of $Z_{\setminus j}$ and that of $Z_j$. Denote $\cX_{\setminus j, \theta}$ as the collection of these pairwise distances, i.e., 
\begin{align*}
 \cX_{\setminus j, \theta} = \{\norm{f_\theta(Z_1) - f_\theta(Z_{j})}_2, ... , \norm{f_\theta(Z_{j-1}) - f_\theta(Z_j)}_2,  \norm{f_\theta(Z_{j+1}) - f_\theta(Z_j)}_2, ... , \norm{f_\theta(Z_N) - f_\theta(Z_j)}_2\},
\end{align*}
where $f_\theta$ is the neural network parameterized by $\theta$, and the subscript of $\cX$ emphasizes its dependence on $\theta$.

Next, we apply SOFT top-$k$ operator to $\cX_{\setminus j, \omega}$, and the returned vector is denoted by $A^\epsilon_{\setminus j, \theta}$. Let $Y_{\setminus j} \in \RR^{M \times (N-1)}$ be the matrix by concatenating the one-hot encoding of labels $y_i$ for $i \neq j$ as columns, and $Y_j \in \RR^M$ the one-hot encoding of the label $y_j$. The loss of $Z_j$ is defined as
\begin{align*}
\ell(Z_j, y_j) = Y_j^\top Y_{\setminus j}^\top A^\epsilon_{\setminus j, \theta}.
\end{align*}
Consequently, the training loss is
\begin{align*}
\cL(\{Z_j, y_j\}_{j=1}^N) = \frac{1}{N} \sum_{j = 1}^N \ell(Z_j, y_j) = \frac{1}{N} \sum_{j=1}^N Y_j^\top Y_{\setminus j}^\top A^\epsilon_{\setminus j, \theta}.
\end{align*}
Recall that the Jacobian matrix of $A^\epsilon_{\setminus j, \theta}$ exists and has no zero entries. This allows us to utilize stochastic gradient descent algorithms to update $\theta$ in the neural network. 
Moreover, since $N$ is often large, to ease the computation, we randomly sample a batch of samples to compute the stochastic gradient at each iteration.

In the prediction stage, we use all the training samples to obtain a predicted label of a query sample. Specifically, we feed the query sample into the neural network to extract its features, and compute pairwise Euclidean distances to all the training samples. We then run the standard $k$NN algorithm \citep{hastie2009elements} to obtain the predicted label. 



\subsection{Experiment}

We evaluate the performance of the proposed neural network-based $k$NN classifier on two benchmark datasets: MNIST dataset of handwritten digits \citep{lecun1998gradient} and the CIFAR-10 dataset of natural images \citep{krizhevsky2009learning} with the canonical splits for training and testing without data augmentation.  We adopt the coefficient of entropy regularizer $\epsilon=10^{-3}$ for MNIST dataset and $\epsilon=10^{-5}$ for CIFAR-10 dataset. Detailed settings of the model and training procedure are deferred to Appendix \ref{sec:exp_setting}.

\noindent{\bf Baselines.} We consider several baselines: 
\begin{itemize}[topsep=0pt,leftmargin=*,nolistsep,nosep]
    \item [1.] Standard $k$NN method. 
    \item [2.] Two-stage training methods: we first extract the features of the images, and then perform $k$NN on the features. The feature is extracted using Principle Component Analysis (PCA, top-$50$ principle components is adopted), autoencoder (AE), or a pretrained Convolutional Neural Network (CNN) using the Cross-Entropy (CE) loss. 
    \item[3.] Differentiable \textit{ranking} + $k$NN: This includes NeuralSort \citep{grover2019stochastic} and \citet{cuturi2019differentiable}. \citet{cuturi2019differentiable} is not directly applicable, which requires some adaptation (see Appendix \ref{sec:exp_setting}).
    \item[4.] Stochastic $k$NN with Gumbel top-$k$ relaxation \citep{xie2019reparameterizable}: The model is referred as RelaxSubSample. 
    \item[5.] Softmax Augmentation for smoothed top-$k$ operation: A combination of $k$ softmax operation is used to replace the top-$k$ operator. Specifically, we recursively perform softmax on $\cX$ for $k$ times (Similar idea appears in \citet{plotz2018neural} and \citet{goyal2018continuous}). At the $k$-th iteration, we mask the top-$(k-1)$ entries with negative infinity. 
    \item[6.] CNNs trained with CE without any top-$k$ component\footnote{Our implementation is based on github.com/pytorch/vision.git}.
\end{itemize}

For the pretrained CNN and CNN trained with CE, we adopt identical neural networks as our method.
 
\noindent{\bf Results.} We report the classification accuracies on the standard test sets in Table \ref{tab:knn_result}. On both datasets, the SOFT $k$NN classifier achieves comparable or better accuracies. 


\begin{table}[htb!]
\centering
\vspace{-0.1in}
\caption{\label{tab:knn_result} Classification accuracy of kNN.}
\begin{tabular}{lll}
\hline
Algorithm      & MNIST    & CIFAR10  \\ \hline
$k$NN            & $97.2\%$ & $35.4\%$ \\
$k$NN+PCA        & $97.6\%$ & $40.9\%$ \\
$k$NN+AE         & $97.6\%$ & $44.2\%$ \\
$k$NN+pretrained CNN & $98.4\%$ & $91.1\%$ \\
RelaxSubSample & $99.3\%$ & $90.1\%$ \\
$k$NN+NeuralSort & $\bm{99.5}\%$ & $90.7\%$ \\
$k$NN+\citet{cuturi2019differentiable} & $99.0\%$ & $84.8\%$\\
$k$NN+Softmax $k$ times & $99.3\%$ & $92.2\%$ \\
CE+CNN \citep{he2016deep}   & $99.0\%$ & $91.3\%$ \\
$k$NN+\textbf{SOFT Top-$k$}   & $99.4\%$ & $\bm{92.6}\%$ \\  \hline
\vspace{-0.1in}
\end{tabular}
\vspace{-0.1in}
\end{table}


\section{Beam Search for Machine Translation}
\label{sec:app2}

Beam search is a popular method for the \textit{inference} of Neural Language Generation (NLG) models, e.g., machine translation models. Here, we propose to incorporate beam search into the \textit{training} procedure based on SOFT top-$k$ operator.


\subsection{Misalignment between Training and Inference}
\label{sec:beam_search_inf}

Denote the predicted sequence as $y=[y^{(1)}, \cdots, y^{(T)}]$, and the vocabularies as $\{z_1, \cdots, z_V\}$. Consider a recurrent network based NLG model. The output of the model at the $t$-th decoding step  is a probability simplex $[\mathbb{P}(y^{(t)}=z_i|h^{(t)}]_{i=1}^V$, where $h^{(t)}$ is the hidden state associated with the sequence $y^{(1:t)}=[y^{(1)},...,y^{(t)}]$.

Beam search recursively keeps the sequences with the $k$ largest likelihoods, and discards the rest. Specifically, at the $(t+1)$-th decoding step, we have $k$ sequences $\tilde{y}^{(1:t),i}$'s obtained at the $t$-th step, where $i=1,...,k$ indexes the sequences. The likelihood of $\tilde{y}^{(1:t),i}$ is denoted by $\cL_{\rm s}(\tilde{y}^{(1:t),i})$. 
We then select the next $k$ sequences by varying $i = 1, \dots, k$ and $j = 1, \dots, V$:
\begin{align*}
    \{\tilde{y}^{(1:t+1),\ell}\}_{\ell=1}^k = \arg \mathrm{top\textrm{-}k}_{[\tilde{y}^{(1:t),i}, z_j]} \cL_{\rm s}([ \tilde{y}^{(1:t),i},z_j]).
\end{align*}
where $\cL_{\rm s}([\tilde{y}^{(1:t),i},z_j])$ is the  likelihood of the sequence appending $z_j$ to $\tilde{y}^{(1:t),i}$ defined as
\begin{align} \label{eq:likelihood}
    \cL_{\rm s}([\tilde{y}^{(1:t),i},z_j])\!=\! \mathbb{P}(y^{(t+1)}\!=\!z_j|h^{(t+1),i})\cL_{\rm s}(\tilde{y}^{(1:t),i}),
\end{align}
and $h^{(t+1),i}$ is the hidden state generated from $\tilde{y}^{(1:t),i}$. Note that $z_j$'s and $\tilde{y}^{(1:t),i}$'s together yield $Vk$ choices.
Here we abuse the notation: $\tilde{y}^{(1:t+1),\ell}$ denotes the $\ell$-th selected sequence at the $(t+1)$-th decoding step, and is not necessarily related to $\tilde{y}^{(1:t),i}$ at the $t$-th decoding step, even if $i=\ell$.



For $t=1$, we set $\tilde{y}^{(1)} = z_{\rm s}$ as the start token, $\mathcal{L}_{\rm s}(y^{(1)})=1$, and $h^{(1)} = h_{\rm e}$ as the output of the encoder.
We repeat the above procedure, until the end token is selected or the pre-specified max length is reached. At last, we select the sequence $y^{(1:T),*}$ with the largest likelihood as the predicted sequence.


Moreover, the most popular training procedure for NLG models directly uses the so-called ``\textit{teacher forcing}'' framework. As the ground truth of the target sequence (i.e., gold sequence) $\bar{y} = [\bar{y}^{(1)}, \cdots, \bar{y}^{(T)}]$ is provided at the training stage,  we can directly maximize the likelihood 
\begin{align}\label{eq:tf}
\mathcal{L}_{\rm tf} = \prod_{t=1}^T \mathbb{P}(y^{(t)}=\bar{y}^{(t)}|h^{(t)}(\bar{y}^{(1:t\textrm{-}1)})).
\end{align}
As can be seen, such a training framework only involve the gold sequence, and cannot take the uncertainty of the recursive exploration of the beam search into consideration. Therefore, it yields a misalignment between model training and inference \citep{bengio2015scheduled}, which is also referred as \textit{exposure bias} \citep{wiseman2016sequence}.

\subsection{Differential Beam Search with Sorted SOFT Top-$k$}

To mitigate the aforementioned misalignment, we propose to integrate beam search into the training procedure, where the top-$k$ operator in the beam search algorithm is replaced with our proposed sorted SOFT top-$k$ operator proposed in Section \ref{sorted-top-k-soft}. 

Specifically, at the $(t+1)$-th decoding step, we have $k$ sequences denoted by $E^{(1:t),i}$, where $i=1,...,k$ indexes the sequences. Here $E^{(1:t),i}$ consists of a sequence of $D$-dimensional vectors, where $D$ is the embedding dimension. We are not using the tokens, and the reason behind will be explained later. Let $\tilde{h}^{(t),i}$ denote the hidden state generated from $E^{(1:t),i}$. We then consider
\begin{align*}
    \mathcal{X}^{(t)} = \{-\mathcal{L}_{\textrm{s}}([E^{(1:t),i},w_j]), j=1,..., V, ~i=1, ..., k\},
\end{align*}
where $\mathcal{L}_{\textrm{s}}(\cdot)$ is defined analogously to \eqref{eq:likelihood}, and $w_j\in\RR^D$ is the embedding of token $z_j$. 

Recall that $\epsilon$ is the smoothing parameter. We then apply the sorted SOFT top-$k$ operator to $\mathcal{X}^{(t)}$ to obtain $\{E^{(1:t+1),\ell}\}_{\ell=1}^k$, which are $k$ sequences with the largest likelihoods. More precisely, the sorted SOFT top-$k$ operator yields an output tensor $A^{(t),\epsilon}\in\RR^{V\times k\times k}$, where $A^{(t),\epsilon}_{ji,\ell}$ denotes the smoothed indicator of whether $[E^{(1:t),i},w_j]$ has a rank $\ell$. We then obtain
\begin{align}\label{eq:emd}
E^{(1:t+1),\ell} = \Big[E^{(1:t),r},\sum_{j=1}^V\sum_{i=1}^kA^{(t),\epsilon}_{ji,\ell}w_j\Big],
\end{align}
where $r$ denotes the index $i$ (for $E^{(1:t),i}$'s) associated with the index $\ell$ (for $E^{(1:t+1),\ell}$'s). This is why we use vector representations instead of tokens: this allows us to compute  $E^{(t+1),\ell}$ as a weighted sum of all the word embeddings $[w_j]_{j=1}^V$, instead of discarding the un-selected words.

Accordingly, we generate the $k$ hidden states for the $(t+1)$-th decoding step:
\begin{align}\label{eq:hid}
    \tilde{h}^{(t),\ell} =   \sum_{j=1}^V \sum_{i=1}^{k} A^{(t),\epsilon}_{ji,\ell}h^{(t),i},
\end{align}
where $h^{(t),i}$ is the intermediate hidden state generated by the decoder based on $E^{(1:t),i}$.


After decoding, we select the sequence with largest likelihood $E^{(1:T),*}$, and maximize the likelihood as follows,
\begin{align*} 
    \mathcal{L}_{{\rm SOFT}} = \prod_{t=1}^T & \mathbb{P}(y^{(t)}=\bar{y}^{(t)}|\tilde{h}^{(t\textrm{-}1),*}(E^{(1:t\textrm{-}1),*})).
\end{align*}
We provide the sketch of training procedure in Algorithm \ref{alg:beam_search2}, where we denote logit$^{(t),i}$ as $[\log \mathbb{P}(y^{(t+1)}=\omega_j|\tilde{h}^{(t),i}(E^{(1:t),i}))]_{j=1}^V$, which is part of the output of the decoder. More technical details (e.g., backtracking algorithm for finding the index $r$ in \eqref{eq:emd}) are provided in Appendix \ref{sec:exp_setting}.

\begin{algorithm}[htb!]
\caption{\label{alg:beam_search2} Beam search training with SOFT Top-$k$}
\begin{algorithmic} 
\REQUIRE Input sequence $s$, target sequence $\bar{y}$; embedding matrix $W\in \mathbb{R}^{V\times D}$; max length $T$; $k$;  regularization coefficient $\epsilon$; number of Sinkhorn iteration $L$
\STATE $\tilde{h}^{(1)}_i = h_{\rm e} = $ Encoder$(s)$,
$E^{(1),i}=w_{\rm s}$ 
\FOR{$t=1,\cdots,T-1$}
\FOR{$i=1, \cdots, k$}
\STATE \hspace{-0.15in} logit$^{(t),i}, h^{(t),i} =$ Decoder$(E^{(t),i}, \tilde{h}^{(t),i})$\\
\STATE \hspace{-0.15in} $\log \mathcal{L}_{\rm s}([E^{(1:t),i},w_j])=\log \mathcal{L}_{\rm s}( E^{(1:t),i})+$logit$^{(t),i}_{j}$\\
\STATE \hspace{-0.15in} $\mathcal{X}^{(t)} = \{-\log \mathcal{L}_{\rm s}([E^{(1:t),i},w_j])~|~j=1, \cdots, V\}$
\ENDFOR
\STATE $A^{(t), \epsilon}$ = Sorted-SOFT-Top-$k$($\cX^{(t)}, k, \epsilon, L$)
\STATE Compute $E^{(t+1),\ell}$, $\tilde{h}^{(t+1),\ell}$ as in \eqref{eq:emd} and \eqref{eq:hid}
\ENDFOR
\STATE Compute $\nabla\mathcal{L}_{{\rm SOFT}}$ and update the model
\end{algorithmic}
\end{algorithm}

Note that integrating the beam search into training  essentially yields a very large search space for the model, which is not necessarily affordable sometimes. To alleviate this issue, we further propose a hybrid approach by combining the teacher forcing training with beam search-type training. Specifically, we maximize the weighted likelihood defined as follows,
\begin{align*}
    \cL_{\rm final} = \rho \cL_{\rm tf} + (1- \rho)\cL_{\rm SOFT},
\end{align*}
where $\rho\in(0,1)$ is referred to as the ``teaching forcing ratio''. The teaching forcing loss $\cL_{\rm tf}$ can help reduce the search space and improve the overall performance.

\subsection{Experiment}

We evaluate our proposed beam search + sorted SOFT top-$k$ training procedure using WMT2014 English-French dataset. 

\noindent{\bf Settings.} We adopt beam size $5$, teacher forcing ratio $\rho=0.8$, and $\epsilon=10^{-1}$.  For detailed settings of the training procedure, please refer to Appendix \ref{sec:exp_setting}. 

We reproduce the experiment in \citet{bahdanau2014neural}, and run our proposed  training procedure with the identical data pre-processing procedure and the LSTM-based sequence-to-sequence model. Different from \citet{bahdanau2014neural}, here we also preprocess the data with \textit{byte pair encoding} \citep{sennrich2015neural}.  

\noindent{\bf Results.} As shown in Table \ref{tab:beam_result}, the proposed SOFT beam search training procedure achieves an improvement in BLEU score of approximately $0.9$. We also include other LSTM-based models for baseline comparison.



\begin{table}[!htb]
\centering
\caption{\label{tab:beam_result} BLEU scores on WMT'14 with single LSTM model.}
\begin{tabular}{ll}
\hline
Algorithm      & BLEU  \\ \hline
\citet{luong2014addressing}  & $33.10$ \\
\citet{durrani2014edinburgh} & $30.82$ \\
\citet{cho2014learning} & $34.54$ \\
\citet{sutskever2014sequence} & $30.59$ \\
\citet{bahdanau2014neural} & $28.45$ \\
\citet{jean2014using}   & $34.60$ \\
\citet{bahdanau2014neural} (Our implementation)    & $35.38$ \\
{\bf Beam Search + Sorted SOFT Top-k}    & $\bm{36.27}$ \\ \hline
\end{tabular}
\vspace{-0.1in}
\end{table}

\section{Top-$k$ Attention for Machine Translation}
\label{sec:app3}

We apply SOFT top-$k$ operator to yield sparse attention scores. Attention module is an integral part of various natural language processing tasks, allowing modeling of long-term and local dependencies. Specifically, given the vector representations of a source sequence $s=[s_1, \cdots, s_{N}]^\top$ and target sequence $y=[y_1, \cdots, y_{M}]^\top$, we compute the alignment score between $s_i$ and $y_j$ by a compatibility function $f(s_i, y_j)$, e.g., $f(s_i, y_j)=s_i^\top y_j$, which measures the dependency between $s_i$ and $y_j$. A softmax function then transforms the scores $[f(s_i,y_j)]^{N}_{i=1}$ to a sum-to-one weight vector $w_j$ for each $y_j$. The output $o_j$ of this attention module is a weighted sum of $s_i$'s, i.e., $o_j = w_j^\top s.$


The attention module described above is called the soft attention, i.e., the attention scores $w_j$ of $y_j$ is not sparse. This may lead to redundancy of the attention \citep{zhu2018fine, schlemper2019attention}. Empirical results show that hard attention, i.e., enforcing sparsity structures in the score $w_j$'s, yields more appealing performance \citep{shankar2018surprisingly}. Therefore, we propose to replace the softmax operation on $[f(s_i,y_j)]^{N}_{i=1}$ by the standard top-$k$ operator to select the top-$k$ elements. In order for an end-to-end training, we further deploy SOFT top-$k$ operator to substitute the standard top-$k$ operator. Given $[f(s_i,y_j)]^{N}_{i=1}$, the output of SOFT top-$k$ operator is denoted by $A^\epsilon_j$, and the weight vector $w_j$ is now computed as
\begin{align*}
    w_{j} = {\rm softmax}([f(s_1,y_j),\dots, f(s_N, y_j)]^\top + \log A^{\epsilon}_{j}).
\end{align*}
\begin{figure}
    \centering
    \includegraphics[width=0.5\linewidth]{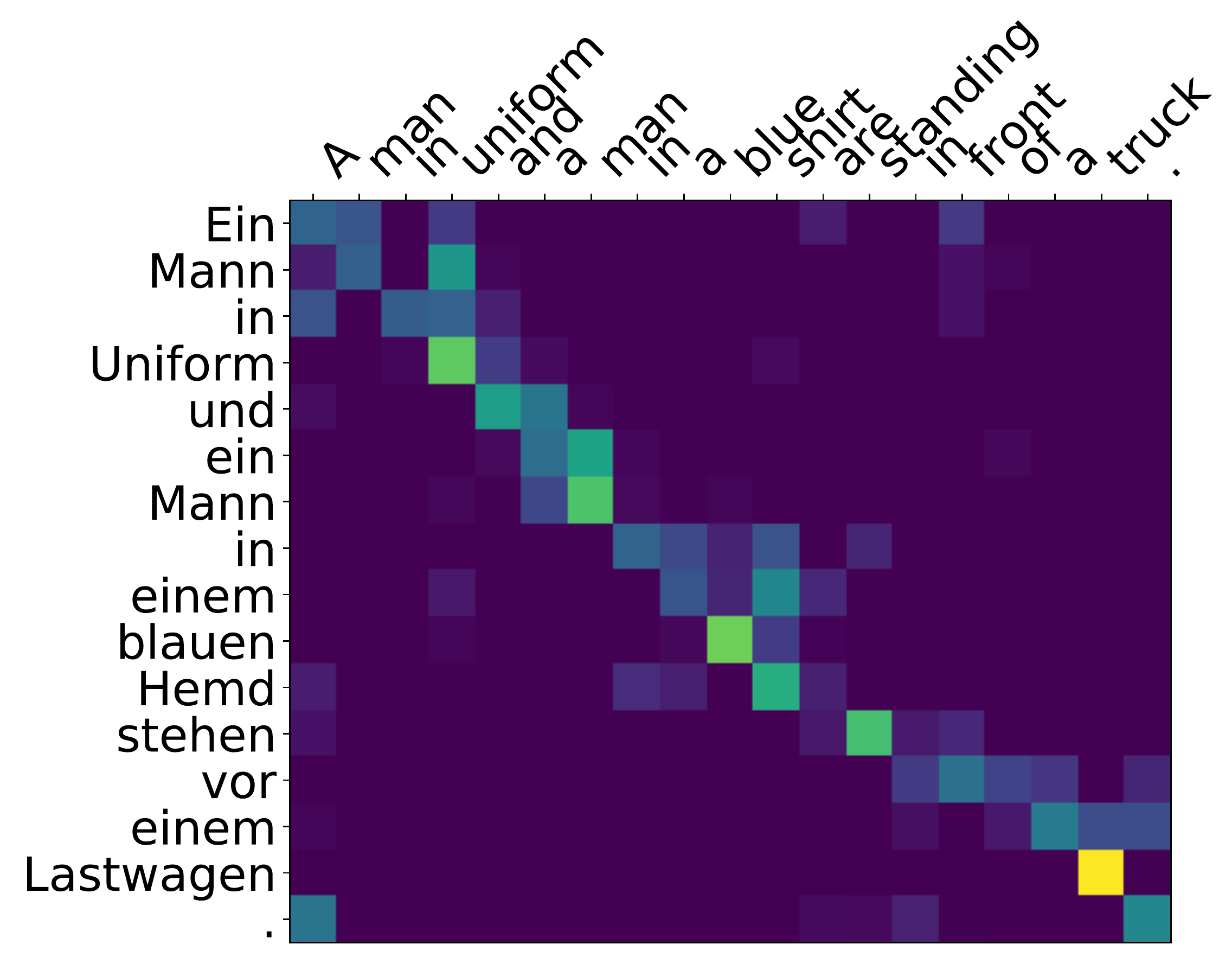}
    \caption{\label{fig:attn} Visualization of the top-$K$ attention. }
\end{figure}
Here $\log$ is the entrywise logarithm. The output $o_j$ of the attention module is computed the same $o_j = w_j^\top s$. 
Such a SOFT top-$k$ attention will promote the top-$k$ elements in $[f(s_i, y_j)]_{i=1}^{N}$ to be even larger than the non-top-$k$ elements, and eventually promote the attention of $y_j$ to focus on $k$ tokens in $s$. 


\subsection{Experiment}

We evaluate the proposed top-$k$ attention on WMT2016 English-German dataset. Our implementation and settings are based on \citet{opennmt}\footnote{Settings on data pre-processing, model, and training procedure is identical to https://opennmt.net/OpenNMT-py/extended.html.}. For a fair comparison, we implement a standard soft attention using the same settings as the baseline. The details are provided in Appendix \ref{sec:exp_setting}.

\noindent{\bf Results.} As shown in Table \ref{tab:attn_result}, the proposed SOFT top-$k$ attention training procedure achieves an improvement in BLEU score of approximately $0.8$. We visualize the top-$k$ attention in Figure \ref{fig:attn}. The attention matrix is sparse, and has a clear semantic meaning -- ``truck" corresponds to ``Lastwagen", ``blue" corresponds to ``blauen", ``standing" corresponds to ``stehen", etc.


\begin{table}[htb!]
\centering
\caption{\label{tab:attn_result} BLEU scores on WMT'16.}
\begin{tabular}{ll}
\hline
Algorithm      & BLEU  \\ \hline
Proposed Top-$k$ Attention   & $\bm{37.30}$ \\
Soft Attention    & $36.54$ \\\hline
\end{tabular}
\vspace{-0.1in}
\end{table}


\section{Related Work}

We parameterize the top-$k$ operator as an optimal transport problem, which shares the same spirit as \citet{cuturi2019differentiable}. Specifically, \citet{cuturi2019differentiable} formulate the ranking and sorting problems as optimal transport problems. Ranking is more complicated than identifying the top-$k$ elements, since one needs to align different ranks to corresponding elements. Therefore, the algorithm complexity per iteration for ranking whole $n$ elements is $\cO(n^2)$. \citet{cuturi2019differentiable} also propose an optimal transport problem for finding the $\tau$-quantile in a set of $n$ elements and the algorithm complexity reduces to $\cO(n)$. Top-$k$ operator essentially finds all the elements more extreme than the $(n-k)/n$-quantile, and our proposed algorithm achieves the same complexity $\cO(n)$ per iteration. The difference is that top-$k$ operator returns the top-$k$ elements in a given input set, while finding a quantile only yields a certain threshold.




Gumbel-Softmax trick \citep{jang2016categorical} can also be utilized to derive a continuous relaxation of the top-$k$ operator. Specifically, \citet{kool2019stochastic} adapted such a trick to sample $k$ elements from $n$ choices, and \citet{xie2019reparameterizable} further applied the trick to stochastic $k$NN, where neural networks are used to approximating the sorting operator. However, as shown in our experiments (see Table \ref{tab:knn_result}), the performance of stochastic $k$NN is not as good as deterministic $k$NN.

Our SOFT beam search training procedure is inspired by several works that incorporate some of the characteristics of beam search into the training procedure \citep{wiseman2016sequence, goyal2018continuous, bengio2015scheduled}. Specifically, \citet{wiseman2016sequence} and \citet{goyal2018continuous} both address the exposure bias issue in beam search.  \citet{wiseman2016sequence} propose a new loss function in terms of the error made during beam search. This mitigates the misalignment of training and testing in beam search. Later, \citet{goyal2018continuous} approximates the top-$k$ operator using $k$ softmax operations (This method is described and compared to our proposed method in \ref{sec:app}). Such an approximation allows an end-to-end training of beam search. In addition, our proposed training loss $\cL_{\textrm{final}}$ is inspired by \citet{bengio2015scheduled}, which combines the teacher forcing training procedure and greedy decoding, i.e., beam search with beam size $1$. 

\section{Discussion}
\label{sec:diss}

\noindent{\bf Relation to automatic differentiation.}
We compute the Jacobian matrix of SOFT top-$k$ operator with respect to its input using the optimal transport plan of the entropic OT problem \eqref{eq:reg_ot} in the backward pass. The optimal transport plan can be obtained by the Sinkhorn algorithm (Algorithm \ref{alg:topk}), which is iterative and each iteration only involves multiplication and addition. Therefore, we can also apply automatic differentiation (auto-diff) to compute the Jacobian matrix. Specifically, we denote $\Gamma_\ell$ as the transport plan at the $t$-th iteration of Sinkhorn algorithm. The update of $\Gamma_\ell$ can be written as $\Gamma_{\ell+1} = \cT(\Gamma_\ell)$, where $\cT$ denotes the update of the Sinkhorn algorithm. In order to apply auto-diff, we need to store all the intermediate states, e.g., $p, q, G$ in each iteration, as defined in Algorithm \ref{alg:topk} at each iteration. This requires a huge memory size proportional to the total number of iterations of the algorithm. In contrast, our backward pass allows us to save memory.

\noindent{\bf Bias and regularization trade-off.}
Theorem \ref{thm:lm2} suggests a trade-off between the regularization and bias of SOFT top-$k$ operator. Specifically, a large $\epsilon$ has a strong smoothing effect on the entropic OT problem, and the corresponding entries of the Jacobian matrix are neither too large nor too small. This eases the end-to-end training process. However, the bias of SOFT top-$k$ operator is large, which can deteriorate the model performance. On the contrary, a smaller $\epsilon$ ensures a smaller bias. Yet the SOFT top-$k$ operator is less smooth, which in turn makes the end-to-end training less efficient.

\begin{figure}[!htb]
    \centering
    \includegraphics[width = 0.28\textwidth, height=0.2\textwidth] {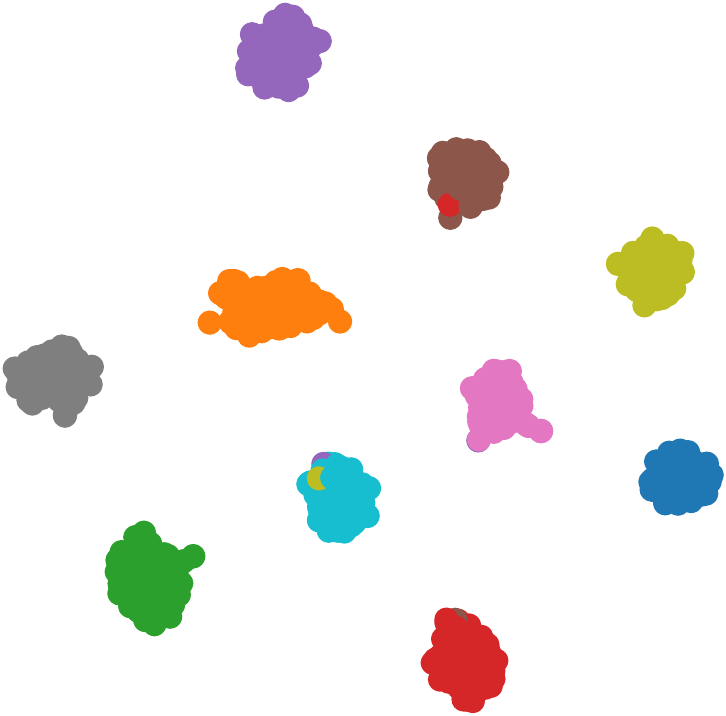}
    \caption{Visualization of the MNIST data based on features extracted by the neural network-based $k$-NN classifier trained by our proposed method in Section \ref{sec:app}.}
    \label{fig:my_label}
    \vspace{-2mm}
\end{figure}
On the other hand, the bias of SOFT top-$k$ operator also depends on the gap between $x_{\sigma_{k+1}}$ and $x_{\sigma_k}$. In fact, such a gap can be viewed as the signal strength of the problem. A large gap implies that the top-$k$ elements are clearly distinguished from the rest of the elements. Therefore, the bias is expected to be small since the problem is relatively easy. Moreover, in real applications such as neural network-based $k$NN classification, the end-to-end training process promotes neural networks to extract features that exhibit a large gap (as illustrated in Figure \ref{fig:my_label}). Hence, the bias of SOFT top-$k$ operator can be well controlled in practice.

\bibliography{reference}
\bibliographystyle{icml2019}

\newpage
\onecolumn
\appendix

\section{Theoretical Guarantees}
\label{sec:theorem_proof}

First, we show that after adding entropy regularization the problem is differentiable.


\noindent{\bf Theorem 1.}
For any $\epsilon>0$, SOFT top-$k$ operator: $\cX \mapsto A^\epsilon$ is differentiable, as long as the cost $C_{ij}$ is differentiable with respect to $x_i$ for any $i, j$. Moreover, the Jacobian matrix of SOFT top-$k$ operator always has a nonzero entry for any $\cX \in\mathbb{R}^n$.

\begin{proof} 
We first prove the differentiability. This part of proof mirrors the proof in \citet{luise2018differential}.
By Sinkhorn's scaling theorem, 
\begin{align*}
        \Gamma^{*, \epsilon} = {\rm diag}(e^{\frac{\xi^*}{\epsilon}})e^{-\frac{C}{\epsilon}}{\rm diag}(e^{\frac{\zeta^*}{\epsilon}}).
    \end{align*}
Therefore, since $C_{ij}$ is differentiable, $\Gamma^{*, \epsilon}$ is differentiable if
$(\xi^*, \zeta^*)$ is differentiable as a function of input scores $X$.

Let us set 
\begin{align*}
        \mathcal{L}(\xi, \zeta; \mu, \nu, C) = \xi^T \mu + \zeta^T \nu -\epsilon \sum_{i,j=1}^{n,m} e^{-\frac{C_{ij}-\xi_i-\zeta_j}{\epsilon}}.
\end{align*} 
and recall that $(\xi^*, \zeta^*) = \argmax_{\xi, \zeta} L(\xi, \zeta; \mu, \nu, C)$. The differentiability of $(\xi^*, \zeta^*)$ is proved using the Implicit Function theorem and follows from the differentiability and strong convexity in $(\xi^*, \zeta^*)$ of the function $\mathcal{L}$. 

Now we prove that $d A^{\epsilon}/d x_\ell$ always has a nonzero entry for $l=1, \cdots, n$. First, we prove that for any $\ell\in \{1, \cdots, n \}$, $d \Gamma^{*, \epsilon}/d x_{\ell}$ always has a nonzero entry. We will prove it by contradiction.  Specifically, the KKT conditions for the stationarity are as follows
\begin{align*}
\xi_i^* + \zeta_j^* =(x_i-y_j)^2 - \epsilon \log \Gamma^{*, \epsilon}_{ij}, \quad \forall i=1, \cdots, n, j=1, \cdots,m.
\end{align*} 
If we view the above formula as a linear equation set of the dual variables, it has $nm$ equations and $m+n$ variables. Therefore, there are $nm-m-n$ redundant equations.
Suppose one of the scores $x_\ell$, has an infinitesimal change $\delta x_\ell$. Assuming $\Gamma^{*, \epsilon}$ does not change, we have a new set of linear equations,
\begin{align*}
& \xi_i^* + \zeta_j^* =(x_i-y_j)^2 - \epsilon \log \Gamma^{*, \epsilon}_{ij}, \quad \forall i\neq \ell, \\
& \xi_{\ell}^* + \zeta_{j}^* =(x_\ell+\delta x_\ell-y_j)^2+\delta C_{\ell j} - \epsilon \log \Gamma^{*, \epsilon}_{\ell j}.
\end{align*} 
Easy to verify that this set of linear equations has no solution. Therefore, there must be at least one entry in $\Gamma^{*, \epsilon}$ has changed. As a result, $d \Gamma^{*, \epsilon}/d x_\ell$ always has a nonzero entry. We denote this entry as $\Gamma^{*, \epsilon}_{i'j'}$.
 Since $\Gamma^{*, \epsilon}_{i'j'}+\Gamma^{*, \epsilon}_{i', 3-j'}=\mu_{i'}$, we have 
\begin{align*}
\frac{d \Gamma^{*, \epsilon}_{i', 3-j'}}{d x_\ell} = -\frac{d \Gamma^{*, \epsilon}_{i'j'}}{d x_\ell} \neq 0.
\end{align*}
Therefore, there must be a nonzero entry in the first column of $d\Gamma^{*, \epsilon}/d x_\ell$. Recall $A^{\epsilon}$ is the first column of $\Gamma^{*, \epsilon}$. As a result, there must be a nonzero entry in $dA^{\epsilon}/d x_\ell$ for any $\ell\in \{1, \cdots, n \}$.


\end{proof}

Second, we would like to know after smoothness relaxation, how much bias is introduced to $A^\epsilon$.

\begin{lemma} \label{thm:lm1} Denote the feasible set of optimal transport problem as $\Delta=\{\Gamma: \Gamma\in [0,1]^{n\times m}, \Gamma \bm{1}_m = \mu, \Gamma \bm{1}_n = \nu\}$. Assume the optimal transport plan is unique. 
Denote $\Gamma^*$ as the optimal transport plan,
\begin{align*}
    \Gamma^* = \argmin_{\Gamma\in \Delta} f(\Gamma) = \argmin_{\Gamma\in \Delta} \langle C, \Gamma \rangle,
\end{align*}
and $\Gamma^{*, \epsilon}$ as the entropy regularized transport plan, 
\begin{align*}
    \Gamma^{*, \epsilon}  = \argmin_{\Gamma\in \Delta} f^{\epsilon} (\Gamma) = \argmin_{\Gamma\in \Delta}f(\Gamma)-\epsilon H(\Gamma) = \argmin_{\Gamma\in \Delta} \langle C, \Gamma \rangle + \epsilon \sum_{i,j} \Gamma_{ij} \ln \Gamma_{ij}.
\end{align*}
We can bound the difference between $\Gamma^*$ and $\Gamma^{*, \epsilon}$ to be
\begin{align*}
\|\Gamma^* - \Gamma^{*, \epsilon}\|_F \leq \epsilon \frac{(\ln n + \ln m)}{B}, 
\end{align*}
where $\|\cdot\|_F$ is the Frobenius norm, and $B$ is a positive constant irrelevant to $\epsilon$.
\end{lemma}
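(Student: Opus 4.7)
}
The plan is to combine a sharpness bound for the linear program on the transportation polytope $\Delta$ with a crude bound on the entropy regularizer. Concretely, the two ingredients are: (i) the fact that the regularizer $H(\Gamma)=\sum_{ij}\Gamma_{ij}\ln\Gamma_{ij}$ is uniformly bounded on $\Delta$, and (ii) a linear growth (``sharpness'') inequality of the form $f(\Gamma)-f(\Gamma^{*})\ge B\,\|\Gamma-\Gamma^{*}\|_{F}$ for every $\Gamma\in\Delta$, with a constant $B>0$ that depends only on the polytope $\Delta$ and the cost matrix $C$, and in particular not on $\epsilon$.

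First I would use the two optimality inequalities $f^{\epsilon}(\Gamma^{*,\epsilon})\le f^{\epsilon}(\Gamma^{*})$ and $f(\Gamma^{*})\le f(\Gamma^{*,\epsilon})$ to isolate the difference:
\begin{align*}
0\le f(\Gamma^{*,\epsilon})-f(\Gamma^{*})\le \epsilon\bigl(H(\Gamma^{*})-H(\Gamma^{*,\epsilon})\bigr).
\end{align*}
Next I would bound the right-hand side. Because $\mu$ and $\nu$ are probability vectors, every $\Gamma\in\Delta$ satisfies $\Gamma_{ij}\in[0,1]$ and $\sum_{ij}\Gamma_{ij}=1$, so $\Gamma_{ij}\ln\Gamma_{ij}\le 0$ gives $H(\Gamma^{*})\le 0$, while the Shannon-type lower bound for a distribution on $nm$ atoms gives $H(\Gamma^{*,\epsilon})\ge -\ln(nm)$. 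Hence $H(\Gamma^{*})-H(\Gamma^{*,\epsilon})\le \ln n+\ln m$, and therefore $f(\Gamma^{*,\epsilon})-f(\Gamma^{*})\le \epsilon(\ln n+\ln m)$.

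The main step, and the one I expect to require the most care, is establishing the sharpness constant $B$. Since $\Delta$ is a bounded polytope, it is the convex hull of finitely many vertices $V_{1},\dots,V_{r}$. By the assumed uniqueness of the minimizer, $\Gamma^{*}$ is one of these vertices, say $V_{1}$, and $f(V_{i})-f(\Gamma^{*})>0$ for every $i\ge 2$. For an arbitrary $\Gamma=\sum_{i}\lambda_{i}V_{i}\in\Delta$ I would write
\begin{align*}
f(\Gamma)-f(\Gamma^{*})=\sum_{i\ge 2}\lambda_{i}\bigl(f(V_{i})-f(\Gamma^{*})\bigr),\qquad \|\Gamma-\Gamma^{*}\|_{F}\le \sum_{i\ge 2}\lambda_{i}\|V_{i}-\Gamma^{*}\|_{F},
\end{align*}
and then take $B:=\min_{i\ge 2}\bigl(f(V_{i})-f(\Gamma^{*})\bigr)/\max_{i\ge 2}\|V_{i}-\Gamma^{*}\|_{F}>0$, which depends only on $\Delta$ and $C$. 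This yields the sharpness inequality $B\,\|\Gamma-\Gamma^{*}\|_{F}\le f(\Gamma)-f(\Gamma^{*})$ on all of $\Delta$.

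Finally, applying the sharpness bound at $\Gamma=\Gamma^{*,\epsilon}\in\Delta$ and combining with the entropy estimate from the second paragraph gives
\begin{align*}
B\,\|\Gamma^{*,\epsilon}-\Gamma^{*}\|_{F}\le f(\Gamma^{*,\epsilon})-f(\Gamma^{*})\le \epsilon(\ln n+\ln m),
\end{align*}
which is exactly the claimed bound after dividing by $B$. The potentially delicate issue is the $\epsilon$-independence of $B$: it is crucial that the sharpness argument uses only properties of the unregularized LP over $\Delta$, so that the constant is purely geometric and does not degrade as $\epsilon\to 0$.
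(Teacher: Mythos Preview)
Your proposal is correct and follows essentially the same route as the paper: bound the entropy by $\ln(nm)$ on the transportation polytope, then use a vertex decomposition of $\Delta$ (with $\Gamma^{*}$ as a vertex by uniqueness) to establish a linear-growth inequality with a constant $B>0$ independent of $\epsilon$. The one noteworthy difference is the constant you extract: the paper uses an angle argument (showing $\cos\theta_{(C,\,\sum_j\lambda_j(e_j-e_0))}\ge\min_j\cos\theta_{(C,\,e_j-e_0)}$) to obtain the sharper value $B=\min_{j\ge 1}\langle C,e_j-e_0\rangle/\|e_j-e_0\|_F$, i.e.\ the minimum of the per-vertex ratios rather than your $\min/\max$ form; this tighter constant is precisely what is evaluated in the subsequent lemma to yield $n(x_{\sigma_{k+1}}-x_{\sigma_k})$ for Theorem~2, but for Lemma~\ref{thm:lm1} as stated your constant already suffices.
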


\begin{proof}
Note that $H(\Gamma)$ is the entropy function. Since $0\leq \Gamma_{ij}\leq 1$ and $\sum_{ij} \Gamma_{ij}=1$ for any $\Gamma \in \Delta$, we can view $\Delta$ as the subset of a simplex. Therefore, 
\begin{enumerate}
    \item $H(\Gamma)$ is non-negative.
    \item The maximum of $H(\Gamma)$ in the simplex can be obtained at $\Gamma_{ij}\equiv \frac{1}{nm}$. Therefore the maximum value is $(\ln n + \ln m)$. 
\end{enumerate}
Therefore, $0\leq H(\Gamma) \leq (\ln n + \ln m)$ for  any $\Gamma \in \Delta$. 

Since $H(\Gamma)\geq 0$, we have $f^{\epsilon}(\Gamma)\leq f(\Gamma)$ for any $\Gamma \in \Delta$. As a result, we have $f^{\epsilon}(\Gamma^{*, \epsilon})\leq f(\Gamma^*)$. In other words, we have
\begin{align*}
    \langle C, \Gamma^{*, \epsilon} \rangle  - \epsilon H(\Gamma^{*, \epsilon}) - \langle C, \Gamma^* \rangle  \leq 0.
\end{align*}
Therefore,
\begin{align*}
    \langle C, \Gamma^{*, \epsilon} - \Gamma^* \rangle = \langle C, \Gamma^{*, \epsilon} \rangle  - \langle C, \Gamma^* \rangle  \leq  \epsilon H(\Gamma^{*, \epsilon}) \leq \epsilon (\ln n + \ln m).
\end{align*}

Since the optimal transport problem is a linear optimization problem, $\Gamma^*$ is one of the vertices of $\Delta$. Denote $e_0, e_1, \cdots, e_J$ as the vertices of $\Delta$, and without loss of generality we assume $e_0=\Gamma^*$. Since $\Gamma^{*, \epsilon}\in \Delta$, we can denote $\Gamma^{*, \epsilon} = \sum_{j=0}^J \lambda_j e_j$, where $\lambda_j\geq0$, and $\sum_j \lambda_j =1$. Since $\Gamma^*$ is unique, we have 
\begin{align*}
    \langle C, e_j-e_0 \rangle > 0, \quad \forall j=1, \cdots, J.
\end{align*}
Denote $B_j = \langle C, e_j-e_0 \rangle$. Since the space we are considering is Euclidean space (if we reshape the matrices into vectors), we can write the inner product as
\begin{align*}
    B_j = \langle C, e_j-e_0 \rangle  = \|C\|_F \|e_j-e_0\|_F \cos \theta_{(C, e_j-e_0)} >0.
\end{align*}
So we have $\cos \theta_{(C, e_j-e_0)} >0$. In other words, the angle between $C$ and $e_j-e_0$ is always smaller than $\frac{\pi}{2}$. Therefore, the angle between $C$ and the affine combination of $e_j-e_0$, namely $\sum_{j=0}^J \lambda_j (e_j-e_0)$, is also smaller than $\frac{\pi}{2}$. More specifically, we have
\begin{align*}
    \cos \theta_{(C, \Gamma^{*, \epsilon} - \Gamma^{*})} = \cos \theta_{(C, \sum_{j=0}^J \lambda_j (e_j-e_0))} \geq \min_j \cos \theta_{(C, e_j-e_0)} = \min_j \frac{B_j}{\|C\|_F \|e_j-e_0\|_F }.
\end{align*}
Therefore, we have
\begin{align*}
    \|\Gamma^{*, \epsilon} - \Gamma^{*}\|_F = \frac{\langle C, \Gamma^{*, \epsilon} - \Gamma^* \rangle}{\|C\|_F\cos \theta_{(C, \Gamma^{*, \epsilon} - \Gamma^{*})}} \leq \frac{\epsilon (\ln n + \ln m)}{\|C\|_F \min_j \frac{B_j}{\|C\|_F \|e_j-e_0\|_F }} = \frac{\epsilon (\ln n + \ln m)}{ \min_j \frac{B_j}{\|e_j-e_0\|_F }}.
\end{align*}
Denote $B = \min_j \frac{B_j}{\|e_j-e_0\|_F }$, and we have the conclusion.
\end{proof}

\begin{remark}
In Theorem 1 we restricted the optimal solution to be unique, only for clarity purpose. If it is not unique, similar conclusion holds, except that the proof is more tedious -- instead of divide the vertices into $e_0$ and others, we need to divide it into the vertices that are optimal solutions and the others.
\end{remark}

\begin{lemma} \label{thm:lm2}
At each of the vertices of $\Delta$, the entries of $\Gamma$ are either $0$ or $1/n$ for $\Gamma \in \Delta$. 
\end{lemma}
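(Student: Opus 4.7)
The plan is to exploit the very specific structure of the top-$k$ polytope: $m=2$, all row sums equal $1/n$, and both column sums $k/n$ and $(n-k)/n$ are integer multiples of $1/n$. Since each row has only two entries summing to $1/n$, each row is parameterized by a single number $a_i := \Gamma_{i,1} \in [0,1/n]$, with $\Gamma_{i,2} = 1/n - a_i$. So the claim reduces to showing that at any vertex, $a_i \in \{0, 1/n\}$ for every $i$. I will prove this by contradiction: assume some vertex $\Gamma$ has at least one \emph{fractional} row, meaning $a_i \in (0, 1/n)$.

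First I would split into two cases depending on how many rows are fractional. If at least two rows $i,i'$ are fractional, the plan is to explicitly exhibit a perturbation: let $E$ be the matrix with $E_{i,1}=+1,\, E_{i,2}=-1,\, E_{i',1}=-1,\, E_{i',2}=+1$, and zero elsewhere. By construction $E$ has zero row sums and zero column sums, so $\Gamma \pm \delta E$ still satisfies both marginal constraints. For $\delta$ small enough, both entries in rows $i$ and $i'$ stay in $[0,1/n] \subset [0,1]$ because $a_i, a_{i'}$ lie strictly inside that interval. Hence $\Gamma^+ := \Gamma + \delta E$ and $\Gamma^- := \Gamma - \delta E$ are distinct feasible points with $\Gamma = \tfrac{1}{2}(\Gamma^+ + \Gamma^-)$, contradicting vertexness.

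Second, I handle the case where exactly one row $i$ is fractional. Then every other row $j \neq i$ has $a_j \in \{0, 1/n\}$. Summing the first column gives
\begin{align*}
    \frac{k}{n} \;=\; \sum_{j=1}^{n} a_j \;=\; a_i + \frac{1}{n}\,\bigl|\{\, j \neq i : a_j = 1/n \,\}\bigr|,
\end{align*}
so $a_i$ must equal $k/n$ minus an integer multiple of $1/n$. No such value lies in the open interval $(0, 1/n)$, giving the desired contradiction. Combining the two cases shows that no row of a vertex can be fractional, establishing the lemma.

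I expect the main obstacle to be the case analysis rather than any technical calculation — in particular, being careful that the perturbation $\pm \delta E$ in Case~1 preserves \emph{all} constraints (both marginals and the box constraint $\Gamma \in [0,1]^{n \times 2}$) simultaneously, and that Case~2 really uses the arithmetic fact that column sums are integer multiples of $1/n$, i.e., the specific form of $\nu$ from the top-$k$ parameterization. No further machinery beyond this explicit cycle construction should be needed.
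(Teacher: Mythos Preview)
Your proposal is correct and follows essentially the same route as the paper. The paper scales to $Z=n\Gamma$, observes that a single fractional entry forces (via the integer column sum $k$ or $n-k$) a second fractional row, and then applies the identical four-entry cycle perturbation $\pm\delta$ to write $Z$ as the midpoint of two feasible points; your Case~2 is exactly their ``a second fractional row must exist'' step and your Case~1 is their perturbation step, just presented in the opposite order.
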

\begin{proof}
The key idea is to prove by contradiction: If there exist $i,j$ such that $\Gamma_{ij}\in(0, 1/n)$, then $\Gamma$ cannot be a vertex. 

To ease the discussion, we denote $Z=n\Gamma$. We will first prove that the entries of $Z$ are either $0$ or $1$ at the vertices.

Notice that 
\begin{align*}
    & Z_{i,1}+Z_{i,2} = 1, \quad \forall i=1, \cdots, n, \\
    & \sum_i Z_{i,1}=k, \\
    & \sum_i Z_{i,2}=n-k.
\end{align*}
If there exists an entry $Z_{i',j'}\in (0,1)$, then
\begin{enumerate}
    \item $Z_{i',3- j'}\in (0,1)$.
    \item there must exist $i''\neq i'$, such that $Z_{i'',j'}\in (0,1)$. This is because $\sum_{i=1}^n Z_{i,j}$ is an integer, and $Z_{i',j'}$ is not. 
    \item As a result, $Z_{i'', 3-j'}\in (0,1)$.
\end{enumerate}
Therefore, consider $\delta\in(-\min \{1-Z_{i'.j'}, Z_{i',j'}\}, \min \{1-Z_{i'.j'}, Z_{i',j'}\})$ and denote
\begin{align*}
    & \tilde{Z}^{(1)}_{ij} = 
    \begin{cases}
    Z_{i',j'}+\delta, {\rm~if~} i=i', j=j', \\
    Z_{i',3-j'}-\delta, {\rm~if~} i=i', j=3-j', \\
    Z_{i'',j'}-\delta, {\rm~if~} i=i'', j=j', \\
    Z_{i'',3-j'}+\delta, {\rm~if~} i=i'', j=3-j', \\
    Z_{i,j}, {\rm~otherwise.} \\
    \end{cases}\\
    & \tilde{Z}^{(2)}_{ij} = 
    \begin{cases}
    Z_{i',j'}-\delta, {\rm~if~} i=i', j=j', \\
    Z_{i',3-j'}+\delta, {\rm~if~} i=i', j=3-j', \\
    Z_{i'',j'}+\delta, {\rm~if~} i=i'', j=j', \\
    Z_{i'',3-j'}-\delta, {\rm~if~} i=i'', j=3-j', \\
    Z_{i,j}, {\rm~otherwise.} \\
    \end{cases}
\end{align*}
We can easily verify that $\tilde{Z}^{(1)}/n, \tilde{Z}^{(2)}/n \in \Delta$, and also $Z = (\tilde{Z}^{(1)}+\tilde{Z}^{(2)})/2$. Therefore, $Z$ cannot be a vertex. 

\end{proof}

\begin{lemma}\label{thm:lm3}
Given a set of scalar $\{x_1, \cdots, x_n\}$, we sort it to be $\{x_{\sigma_1}, \cdots, x_{\sigma_n}\}$. 
If Euclidean square cost is adopted, $\Gamma^*$ has the following form,
\begin{align*}
    \Gamma^{*}_{ij} = 
    \begin{cases}
       1/n, {\rm ~if~} i=\sigma_\ell, j=1, \ell\leq k\\
       0, {\rm ~if~} i=\sigma_\ell, j=1, k<\ell\leq n \\
       1/n, {\rm ~if~} i=\sigma_\ell, j=2, k<\ell\leq n\\
       0, {\rm ~if~} i=\sigma_\ell, j=2, \ell\leq k \\
    \end{cases}
\end{align*}
And $\min_j \frac{B_j}{\|e_j-e_0\|_F}$ is attained at at a vertex $\Gamma^{**}$, where $\Gamma^{**}_{ij}=\Gamma^{*}_{ij}$ except that the $\sigma_k$-th row and the $\sigma_{k+1}$-th row are swapped. As a result, we have
\begin{align*}
    \min_j \frac{B_j}{\|e_j-e_0\|_F} = n(x_{\sigma_{k+1}} -x_{\sigma_{k}}).
\end{align*}
\end{lemma}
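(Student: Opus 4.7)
The plan is to enumerate all vertices of $\Delta$ via Lemma \ref{thm:lm2}, and reduce $B_j/\|e_j-e_0\|_F$ to a combinatorial optimization over $k$-subsets of $\{1,\ldots,n\}$. By Proposition 1 the reference vertex $e_0 = \Gamma^*$ is parameterized by the top-$k$ index set $S_0 = \{\sigma_1,\ldots,\sigma_k\}$, meaning $(e_0)_{i,1} = 1/n$ iff $i \in S_0$. By Lemma \ref{thm:lm2} every other vertex $e_j$ has entries in $\{0, 1/n\}$ and is likewise parameterized by a distinct $k$-subset $S_j \subseteq \{1,\ldots,n\}$.

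First I would reduce both $B_j$ and $\|e_j - e_0\|_F$ to closed-form set-theoretic quantities. Using $C_{i,1} = x_i^2$ and $C_{i,2} = (x_i-1)^2$, every row on which $S_j$ and $S_0$ agree contributes zero, and a direct expansion (paralleling the one in the proof of Proposition 1) gives
\begin{align*}
B_j \;=\; \frac{2}{n}\Bigl(\sum_{i\in S_j\setminus S_0} x_i \;-\; \sum_{i\in S_0\setminus S_j} x_i\Bigr),
\end{align*}
while the difference $e_j - e_0$ has exactly $4 s_j$ nonzero entries each of magnitude $1/n$, where $s_j := |S_j \setminus S_0| = |S_0 \setminus S_j|$, so $\|e_j - e_0\|_F = 2\sqrt{s_j}/n$. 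Taking the ratio eliminates dependence on anything but $s_j$ and the particular indices exchanged.

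Second, I would perform the combinatorial minimization. For fixed swap size $s$, the numerator is minimized by a rearrangement argument: $S_j \setminus S_0$ should consist of the $s$ smallest elements outside $S_0$, namely $\{\sigma_{k+1},\ldots,\sigma_{k+s}\}$, while $S_0 \setminus S_j$ should consist of the $s$ largest elements of $S_0$, namely $\{\sigma_{k-s+1},\ldots,\sigma_k\}$. The resulting minimum numerator equals $\sum_{\ell=1}^{s}(x_{\sigma_{k+\ell}} - x_{\sigma_{k-s+\ell}})$, each summand of which is at least the gap $x_{\sigma_{k+1}} - x_{\sigma_k}$ by sortedness. Dividing by $\sqrt{s}$ gives a ratio lower bound of order $\sqrt{s}(x_{\sigma_{k+1}} - x_{\sigma_k})$, which is nondecreasing in $s$. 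Hence the overall minimum occurs at $s = 1$, attained by the single-swap vertex $\Gamma^{**}$ obtained by exchanging rows $\sigma_k$ and $\sigma_{k+1}$ of $\Gamma^*$; direct evaluation at $s=1$ then yields the value claimed in the lemma (up to the normalization convention on the vertices $e_j$).

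The main obstacle is the two-level minimization in the second step. One must first establish, via a rearrangement-style argument, that for each fixed $s$ the canonical swap is optimal; intuitively, pulling a larger element into $S_j\setminus S_0$ or a smaller element into $S_0\setminus S_j$ monotonically increases the numerator. One must then compare these per-$s$ optima, for which the interplay between the $\sqrt{s}$ denominator and the $s$-term telescoping numerator must be handled via sortedness of $\{x_{\sigma_\ell}\}$. A single swap wins because it isolates the smallest possible gap $x_{\sigma_{k+1}} - x_{\sigma_k}$ without diluting the $1/\sqrt{s}$ factor that a bulk swap would incur.
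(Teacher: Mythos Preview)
Your proposal is correct and follows essentially the same approach as the paper: parameterize the vertices of $\Delta$ by $k$-subsets $S_j$, write $B_j$ and $\|e_j-e_0\|_F$ explicitly in terms of the symmetric difference $S_j\triangle S_0$, and bound the ratio below by $\sqrt{s_j}\,(x_{\sigma_{k+1}}-x_{\sigma_k})$. The one difference is that the paper skips your intermediate rearrangement step and directly bounds $\sum_{i\in S_j\setminus S_0}x_i-\sum_{i\in S_0\setminus S_j}x_i\ge s_j(x_{\sigma_{k+1}}-x_{\sigma_k})$ for \emph{every} vertex, using only that each element of $S_j\setminus S_0\subseteq\{ \sigma_{k+1},\ldots,\sigma_n\}$ exceeds each element of $S_0\setminus S_j\subseteq\{\sigma_1,\ldots,\sigma_k\}$ by at least the gap; your per-$s$ optimization is correct but unnecessary.
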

\begin{proof}
From Lemma \ref{thm:lm2}, in each vertex the entries of $\Gamma$ is either $0$ or $1/n$. Also,  $\Gamma^*\in\Delta=\{\Gamma: \Gamma\in [0,1]^{n\times m}, \Gamma \bm{1}_m = \bm{1}_n/n, \Gamma \bm{1}_n = [k/n, (n-k)/n]^\top\}$. Therefore, for the $j$-th vertex, there are $k$ entries with value $1/n$ in the first row of $\Gamma$. Denote the row indices of these $k$ entries as $\mathcal{I}_j$, and $\Omega=\{1, \cdots, n\}$. Then for each vertex we have
\begin{align*}
    & \Gamma_{i,1}=1/n, \quad \forall i\in \mathcal{I}_j \\
    & \Gamma_{i,1}=0, \quad \forall i\in \Omega \backslash \mathcal{I}_j \\
    & \Gamma_{i,2}=1/n, \quad \forall i\in \Omega \backslash \mathcal{I}_j \\
    & \Gamma_{i,2}=0, \quad \forall i\in \mathcal{I}_j.
\end{align*}
Denote $\mathcal{I}^*=\{\sigma_1, \cdots, \sigma_k\}$. We now prove that $\mathcal{I}^*$ corresponds to the optimal solution $\Gamma^*$. This is because for any $j\in \{1, \cdots, J\}$
\begin{align*}
    \Gamma(\mathcal{I}_j) - \Gamma(\mathcal{I}^*) 
    & = \left(\sum_{i\in\mathcal{I}_j} x_i^2 + \sum_{i\in\Omega\backslash\mathcal{I}_j} (x_i-1)^2\right) -\left(\sum_{i\in\mathcal{I}^*} x_i^2 + \sum_{i\in\Omega\backslash\mathcal{I}^*} (x_i-1)^2\right) \\
    & = \left(\sum_{i\in \Omega} x_i^2 - \sum_{i\in\Omega\backslash\mathcal{I}_j} 2x_i + (n-k)\right) - \left(\sum_{i\in \Omega} x_i^2 - \sum_{i\in\Omega\backslash\mathcal{I}^*} 2x_i + (n-k) \right) \\
    & = 2 \left(\sum_{i\in\Omega\backslash\mathcal{I}^*} x_i - \sum_{i\in\Omega\backslash\mathcal{I}_j} x_i \right) \geq 0,
\end{align*}
where the last step is because the elements with indices $\Omega\backslash\mathcal{I}_j$ is the largest $n-k$ elements. Therefore we have $\Gamma(\mathcal{I}^*)=\Gamma^*$. 

Now let's compute $\min_{j\neq 0} B_j/\|e_j-e_0\|$.   Denote set subtraction $\mathcal{A}-\mathcal{B}$ as the set if elements that belongs to $\mathcal{A}$ but do not belong to $\mathcal{B}$, and $|\mathcal{A}|$ as the number of elements in $\mathcal{A}$.
\begin{align*}
    \frac{B_j}{\|e_j-e_0\|} 
    & = \frac{B_j}{\|\Gamma(\mathcal{I}_j)-\Gamma(\mathcal{I}^*)\|} \\
    & = 2 \frac{\sum_{i\in\Omega\backslash\mathcal{I}^*} x_i - \sum_{i\in\Omega\backslash\mathcal{I}_j} x_i}{2\sqrt{|\mathcal{I}^*-\mathcal{I}_j|}/n} \\
    & =  n \frac{\sum_{i\in( \mathcal{I}_j-\mathcal{I}^*)} x_i - \sum_{i\in(\mathcal{I}^*-\mathcal{I}_j)} x_i}{\sqrt{|\mathcal{I}^*-\mathcal{I}_j|}},
\end{align*}
where the second line can be obtained by substituting the definition of $B_j$. Notice that $\mathcal{I}_j-\mathcal{I}^*\in\Omega\backslash \mathcal{I}^*$ and $\mathcal{I}^*-\mathcal{I}_j \in \mathcal{I}^*$. Any element with index in $\Omega \backslash \mathcal{I}^*$ is larger than any element in $\mathcal{I}^*$ by at least $x_{\sigma_{K+1}}-x_{\sigma_{K}}$. Then we have
\begin{align*}
    \frac{B_j}{\|e_j-e_0\|} 
    & =  N \frac{\sum_{i\in( \mathcal{I}_j-\mathcal{I}^*)} x_i - \sum_{i\in(\mathcal{I}^*-\mathcal{I}_j)} x_i}{\sqrt{|\mathcal{I}^*-\mathcal{I}_j|}} \\
    & \geq N \frac{|\mathcal{I}^*-\mathcal{I}_j|(x_{\sigma_{K+1}}-x_{\sigma_{K}})}{\sqrt{|\mathcal{I}^*-\mathcal{I}_j|}} \\
    & \geq N(x_{\sigma_{K+1}}-x_{\sigma_{K}}),
\end{align*}
where the last step is because for $j\neq 0$, $|\mathcal{I}^*-\mathcal{I}_j|$ is at least $1$. 

Also notice that the value $n(x_{\sigma_{k+1}}-x_{\sigma_{k}})$ can be attained at $\mathcal{I}_{j^*} = \{\sigma_1, \cdots, \sigma_{k-1}, \sigma_{k+1}\}$. Therefore we have 
\begin{align*}
    \min_j \frac{B_j}{\|e_j-e_0\|} =n(x_{\sigma_{k+1}} -x_{\sigma_{k}}).
\end{align*}
\end{proof}

\noindent{\bf Theorem 2.}
Given a distinct sequence $\cX$ and its sorting permutation $\sigma$, with Euclidean square cost function, for the proposed top-$k$ solver we have 
\begin{align*}
    \|\Gamma^{*, \epsilon} - \Gamma^{*}\| \leq \frac{\epsilon (\ln n + \ln 2)}{n(x_{\sigma_{k+1}} -x_{\sigma_{k}}) }.
\end{align*}

\begin{proof}
This is a direct conclusion with Lemma \ref{thm:lm1} and Lemma \ref{thm:lm3}.
\end{proof}

\section{The Expression of the Gradient of $A^\epsilon$}
\label{sec:dev_gradient}
In this section we will derive the expression of $d A^{\epsilon}/d x_i$. We first list a few reminders that will be used later:
\begin{itemize}
    \item $\{x_i\}_{i=1}^n$ is a scalar set to be solved for top-$k$. $\{y_j\}_{j=1}^m$ is taken to be $\{0, 1\}$.
    \item $C\in \mathbb{R}^{n\times m}$ is the cost matrix, usually defined as $C_{ij} = (x_i-y_j)^2$. 
    \item The loss function of entropic optimal transport is 
    \begin{align*}
        \Gamma^{*, \epsilon} = \argmin_{\Gamma\in \Delta} f^{\epsilon} (\Gamma) = \argmin_{\Gamma\in \Delta} \langle C, \Gamma \rangle + \epsilon \sum_{i,j} \Gamma_{ij} \ln \Gamma_{ij},
    \end{align*}
    where $\Delta=\{\Gamma: \Gamma\in [0,1]^{n\times m}, \Gamma \bm{1}_m = \mu, \Gamma \bm{1}_n = \nu\}$.
    \item The dual problem of the above optimization problem is
    \begin{align*}
        \xi^*, \zeta^* = \argmax_{\xi, \zeta} \mathcal{L}(\xi, \zeta; C),
    \end{align*}
    where
    \begin{align*}
        \mathcal{L}(\xi, \zeta; C) = \xi^\top \mu + \zeta^\top \nu -\epsilon \sum_{i,j=1}^{n,m} e^{-\frac{C_{ij}-\xi_i-\zeta_j}{\epsilon}}.
    \end{align*}
    And it is connected to the prime form by 
    \begin{align*}
        \Gamma^{*, \epsilon} = {\rm diag}(e^{\frac{\xi^*}{\epsilon}})e^{-\frac{C}{\epsilon}}{\rm diag}(e^{\frac{\zeta^*}{\epsilon}}).
    \end{align*}
    The converged $p,q$ in Algorithm \ref{alg:topk} is actually $e^{\frac{\xi^*}{\epsilon}}$ and $e^{\frac{\zeta^*}{\epsilon}}$.
\end{itemize}

If we obtain the expression for $\frac{d \xi^*}{d C}$ and $\frac{d \zeta^*}{d C}$, we can obtain the expression for  $\frac{d A^{\epsilon}}{d x_i}$. 

In this section only, we denote $\Gamma = \Gamma^{*, \epsilon}$, to shorten the notation. The multiplication of $3$rd-order tensors mirrors the multiplication of matrices: we always use the last dimension of the first input to multiplies the first dimension of the second input. We denote $\bar{b}=b_{:-1}$ as $b$ removing the last entry, $\bar{\nu}=\nu_{:-1}$ as $\nu$ removing the last entry, $\bar{\Gamma} = \Gamma_{:,:-1}$ as $\Gamma$ removing the last column.

\begin{theorem} $\frac{d \xi^*}{d C}$ and $\frac{d \zeta^*}{d C}$ have the following expression,
\begin{align*}
\begin{bmatrix}
    \frac{d \xi^*}{d C}\\
    \frac{d \zeta^*}{d C}
\end{bmatrix} = 
\begin{bmatrix}
 - H^{-1} D \\
 \bm{0}
\end{bmatrix}
\end{align*}
where $- H^{-1} D\in \mathbb{R}^{(n+m-1)\times n \times m}$, $\bm{0}\in \mathbb{R}^{1\times n \times m}$, and
\begin{align*}
    & D_{\ell ij} = \frac{1}{\epsilon} \begin{cases}
    \delta_{\ell i} \Gamma_{ij}, \ell=1, \cdots, n \\
    \delta_{\ell j} \Gamma_{ij}, \ell=n+1, \cdots, n+m-1 
    \end{cases} \\
    & H^{-1} = -{\epsilon} \begin{bmatrix} 
    ({\rm diag}(\mu))^{-1} + ({\rm diag}(\mu))^{-1} \bar{\Gamma} \mathcal{K}^{-1} \bar{\Gamma}^T ({\rm diag}(\mu))^{-1} & - ({\rm diag}(\mu))^{-1} \bar{\Gamma} \mathcal{K}^{-1} \\
    -\mathcal{K}^{-1} \bar{\Gamma} ^T ({\rm diag}(\mu))^{-1} & \mathcal{K}^{-1}
    \end{bmatrix} \\
    & \mathcal{K} = {\rm diag}(\bar{\nu}) - \bar{\Gamma}^T ({\rm diag}(\mu))^{-1} \bar{\Gamma}.
\end{align*}
\end{theorem}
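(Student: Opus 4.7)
The plan is to apply the implicit differentiation identity already derived in the main text,
\begin{align*}
\frac{d\omega^*}{dC} = -\left(\frac{\partial \phi(\omega^*;C)}{\partial \omega^*}\right)^{-1} \frac{\partial \phi(\omega^*;C)}{\partial C},
\end{align*}
to the stationarity condition $\phi(\omega^*;C)\equiv 0$, so the bulk of the work is just computing the two partial derivatives in closed form and inverting the Hessian by block elimination.

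First I would differentiate the Lagrangian entrywise, using $\Gamma_{ij} = \exp(-(C_{ij}-\xi_i-\zeta_j)/\epsilon)$ so that $\partial \Gamma_{ij}/\partial \xi_{i'} = \frac{1}{\epsilon}\delta_{ii'}\Gamma_{ij}$, $\partial \Gamma_{ij}/\partial \zeta_{j'} = \frac{1}{\epsilon}\delta_{jj'}\Gamma_{ij}$ and $\partial \Gamma_{ij}/\partial C_{i'j'}=-\frac{1}{\epsilon}\delta_{ii'}\delta_{jj'}\Gamma_{ij}$. The gradient $\phi$ has components $\mu_i-\sum_j\Gamma_{ij}$ and $\nu_j-\sum_i\Gamma_{ij}$, so a second differentiation gives, on the full $(n+m)$-dimensional variable $\omega=(\xi,\zeta)$, the Hessian
\begin{align*}
\frac{\partial\phi}{\partial\omega} = -\frac{1}{\epsilon}\begin{bmatrix}\mathrm{diag}(\mu) & \Gamma \\ \Gamma^\top & \mathrm{diag}(\nu)\end{bmatrix},
\end{align*}
and the cross derivative $\partial\phi/\partial C$ is exactly the tensor $D$ in the statement (with rows $n+1,\dots,n+m$ indexed by $\zeta$).

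The main obstacle is that this full Hessian is singular: the vector $[\mathbf{1}_n;-\mathbf{1}_m]$ lies in its kernel, reflecting the invariance of $\mathcal{L}$ and of $\Gamma^{*,\epsilon}$ under the translation $\xi\mapsto \xi+c,\ \zeta\mapsto \zeta-c$. To make the implicit differentiation well-posed, I would use this gauge freedom to fix $\zeta^*_m$ at a constant, which immediately yields $d\zeta^*_m/dC=0$ and explains the zero bottom row in the statement. Restricting to the remaining $n+m-1$ dual coordinates yields the reduced Hessian
\begin{align*}
H = -\frac{1}{\epsilon}\begin{bmatrix}\mathrm{diag}(\mu) & \bar\Gamma \\ \bar\Gamma^\top & \mathrm{diag}(\bar\nu)\end{bmatrix},
\end{align*}
which one checks is non-singular (the top-left block is diagonal with strictly positive entries, and the Schur complement $\mathcal{K}=\mathrm{diag}(\bar\nu)-\bar\Gamma^\top(\mathrm{diag}(\mu))^{-1}\bar\Gamma$ is the remaining diagonal block of the negative definite reduced Hessian of a strongly convex restricted dual problem, hence invertible).

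Finally, I would apply the standard $2\times 2$ block-inverse formula with Schur complement taken about the top-left block $\mathrm{diag}(\mu)$. The resulting four blocks match exactly the expression for $H^{-1}$ in the statement, and assembling them with $D$ and the trivial $\mathbf{0}$ row for $\zeta^*_m$ produces the claimed Jacobian. The only delicate point beyond the routine block inversion is the justification of the reduction step, which I would argue by noting that any two translates of $(\xi^*,\zeta^*)$ produce the same $\Gamma^{*,\epsilon}$ via Sinkhorn's scaling, so any consistent choice of gauge gives the same Jacobian of $\Gamma^{*,\epsilon}$ with respect to $C$.
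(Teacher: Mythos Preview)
Your proposal is correct and follows essentially the same route as the paper: both fix the gauge by pinning $\zeta^*_m$ (the paper phrases this as ``there is one redundant dual variable'' and works directly with $\bar\zeta$), compute the reduced Hessian $H=-\tfrac{1}{\epsilon}\begin{bmatrix}\mathrm{diag}(\mu)&\bar\Gamma\\\bar\Gamma^\top&\mathrm{diag}(\bar\nu)\end{bmatrix}$ and the cross-derivative $D$ entrywise, and then invert $H$ via the Schur complement about $\mathrm{diag}(\mu)$. The only cosmetic difference is that you first write the full singular $(n+m)$-Hessian and then restrict, whereas the paper starts from the restricted Lagrangian; the computations and conclusions are identical.
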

\begin{proof} Notice that there is one redundant dual variable, since $\mu \bm{1}_N = \nu \bm{1}_M=1$. Therefore, we can rewrite $\mathcal{L}(\xi, \zeta; C)$ as
\begin{align*}
        \mathcal{L}(\xi,\bar{\zeta}; C) = \xi^T \mu + \bar{\zeta}^T \bar{\nu} -\epsilon \sum_{i,j=1}^{n,m-1} e^{\frac{-C_{ij}+\xi_i+\zeta_j}{\epsilon}}-\epsilon \sum_{i=1}^{n} e^{\frac{-C_{im}+\xi_i}{\epsilon}}.
\end{align*}
Denote
\begin{align}
    &\phi(\xi, \bar{\zeta},C) = \frac{d \mathcal{L}(\xi, \bar{\zeta};C)}{d \xi}
     = \mu - F \bm{1}_m, \label{eq:phi} \\
   & \psi(\xi, \bar{\zeta},C) = \frac{d \mathcal{L}(\xi, \bar{\zeta};C)}{d \bar{\zeta}}
     = \bar{\nu} - \bar{F}^\top \bm{1}_n, \label{eq:psi} 
\end{align}
where 
\begin{align}
     & F_{ij}  = e^{\frac{-C_{ij}+\xi_i+\zeta_j}{\epsilon}}, \quad\forall i=1, \cdots, n, \quad j=1, \cdots, m-1 \nonumber \\
     & F_{im} = e^{\frac{-C_{im}+\xi_i}{\epsilon}}, \quad\forall i=1, \cdots, n, \nonumber\\
     & \bar{F}  = F_{:,:-1}. \nonumber
\end{align}
Since $(\xi^*, \bar{\zeta}^*)$ is a maximum of $\mathcal{L}(\xi,\bar{\zeta}; C)$, we have
\begin{align*}
    & \phi(\xi^*, \bar{\zeta}^*,C)=0, \\
    & \psi(\xi^*, \bar{\zeta}^*,C)=0 .
\end{align*}
Therefore,
\begin{align*}
    & \frac{d \phi(\xi^*, \bar{\zeta}^*,C)}{d C} = \frac{\partial \phi(\xi^*, \bar{\zeta}^*,C)}{\partial C} + \frac{\partial \phi(\xi^*, \bar{\zeta}^*,C)}{\partial \xi^*} \frac{d \xi^*}{d C}+ \frac{\partial \phi(\xi^*, \bar{\zeta}^*,\mu,\nu,C)}{\partial \bar{\zeta}^*} \frac{d \bar{\zeta}^*}{d C} = 0, \\
    & \frac{d \psi(\xi^*, \bar{\zeta}^*,C)}{d C} = \frac{\partial \psi(\xi^*, \bar{\zeta}^*,C)}{\partial C} + \frac{\partial \psi(\xi^*, \bar{\zeta}^*,C)}{\partial \xi^*} \frac{d \xi^*}{d C}+ \frac{\partial \psi(\xi^*, \bar{\zeta}^*,C)}{\partial \bar{\zeta}^*} \frac{d \bar{\zeta}^*}{d C} = 0.
\end{align*}
Therefore,
\begin{align*}
    \begin{bmatrix}
    \frac{d \xi^*}{d C} \\
    \frac{d \bar{\zeta}^*}{d C}
    \end{bmatrix} & = -
    \begin{bmatrix}
    \frac{\partial \phi(\xi^*, \bar{\zeta}^*,C)}{\partial \xi^*} & \frac{\partial \phi(\xi^*, \bar{\zeta}^*,C)}{\partial \bar{\zeta}^*}  \\
    \frac{\partial \psi(\xi^*, \bar{\zeta}^*,C)}{\partial \xi^*} & \frac{\partial \psi(\xi^*, \bar{\zeta}^*,C)}{\partial \bar{\zeta}^*}
    \end{bmatrix}^{-1}  
    \begin{bmatrix}
    \frac{\partial \phi(\xi^*, \bar{\zeta}^*,C)}{\partial C} \\
    \frac{\partial \psi(\xi^*, \bar{\zeta}^*,C)}{\partial C}
    \end{bmatrix} \\
    & \triangleq -
    H^{-1}  
    \begin{bmatrix}
    D^{(1)} \\
    D^{(2)}
    \end{bmatrix} \\
    & \triangleq - H^{-1} D.
\end{align*}
Now let's compute each of the terms.
\begin{align*}
    \frac{\partial \phi(\xi^*,\bar{\zeta}^*,C)_h}{\partial C_{ij}} & = - \frac{\partial [F\bm{1}_m]_h}{\partial C_{ij}}
    = - \frac{\partial}{\partial C_{ij}} \left(\sum_{\ell=1}^{m-1} e^{\frac{-C_{h \ell}+a_h+b_\ell}{\epsilon}} + e^{\frac{-C_{hm}+a_h}{\epsilon}} \right) \\
    & = \frac{1}{\epsilon} \delta_{hi} F_{ij} =  \frac{1}{\epsilon} \delta_{hi} \Gamma_{ij}\\
    &\forall h=1, \cdots, n, \quad i=1, \cdots, n, \quad j=1, \cdots, m \\
    \frac{\partial \psi(\xi^*,\bar{\zeta}^*,C)_\ell}{\partial C_{ij}} & = - \frac{\partial [\bar{F}^\top \bm{1}_n]_\ell}{\partial C_{ij}}
    = - \frac{\partial}{\partial C_{ij}} \sum_{h=1}^{n} e^{\frac{-C_{h \ell}+a_h+b_\ell}{\epsilon}}  \\
    & = \frac{1}{\epsilon} \delta_{\ell j} F_{ij} =  \frac{1}{\epsilon} \delta_{\ell j} \Gamma_{ij}\\
    &\forall \ell=1, \cdots, m-1, \quad i=1, \cdots, n, \quad j=1, \cdots, m \\
    \frac{\partial \phi(\xi^*, \bar{\zeta}^*,C)_h}{\partial \xi^*_i} & = - \frac{\partial [F\bm{1}_m]_h}{\partial \xi^*_{i}}
    = - \frac{\partial}{\partial \xi^*_{i}} \left(\sum_{\ell=1}^{m-1} e^{\frac{-C_{h\ell}+a_h+b_\ell}{\epsilon}} + e^{\frac{-C_{hm}+a_h}{\epsilon}} \right) \\
    & = - \frac{1}{\epsilon} \delta_{hi} \sum_{\ell=1}^{m} F_{h \ell} =  - \frac{1}{\epsilon} \delta_{hi} \mu_{h} \\
    & \forall h=1, \cdots, n, \quad i=1, \cdots, n \\
    \frac{\partial \phi(\xi^*, \bar{\zeta}^*,C)_h}{\partial \bar{\zeta}^*_j} & = - \frac{\partial [F\bm{1}_m]_h}{\partial \bar{\zeta}^*_{j}}
    = - \frac{\partial}{\partial \bar{\zeta}^*_{j}} \left(\sum_{\ell=1}^{m-1} e^{\frac{-C_{h\ell}+a_h+b_\ell}{\epsilon}} + e^{\frac{-C_{hm}+a_h}{\epsilon}} \right) \\
    & = - \frac{1}{\epsilon} \sum_{\ell=1}^{m-1}\delta_{\ell j}  F_{h\ell} =- \frac{1}{\epsilon}   F_{hj} =  - \frac{1}{\epsilon} \Gamma_{hj} \\
    & \forall h=1, \cdots, n, \quad j=1, \cdots, m-1 \\
    \frac{\partial \psi(\xi^*, \bar{\zeta}^*,C)_\ell}{\partial \xi^*_i} & = - \frac{\partial [\bar{F}^\top\bm{1}_n]_\ell}{\partial \xi^*_{i}}
    = - \frac{\partial}{\partial \xi^*_{i}}\sum_{h=1}^{n} e^{\frac{-C_{h\ell}+a_h+b_\ell}{\epsilon}}  \\
    & = - \frac{1}{\epsilon} \sum_{h=1}^{n}\delta_{hi}  F_{h\ell} =- \frac{1}{\epsilon}   F_{i\ell} =  - \frac{1}{\epsilon} \Gamma_{i\ell} \\
    & \forall \ell=1, \cdots, m-1, \quad i=1, \cdots, n \\
    \frac{\partial \psi(\xi^*, \bar{\zeta}^*,C)_\ell}{\partial \bar{\zeta}^*_{j}} & = - \frac{\partial [\bar{F}^\top\bm{1}_n]_\ell}{\partial \bar{\zeta}^*_{j}}
    = - \frac{\partial}{\partial \bar{\zeta}^*_{j}}\sum_{h=1}^{n} e^{\frac{-C_{h\ell}+a_h+b_\ell}{\epsilon}}  \\
    & = - \frac{1}{\epsilon} \sum_{h=1}^{n}\delta_{\ell j}  F_{h\ell} = - \frac{1}{\epsilon} \delta_{\ell j} \nu_{\ell} \\
    & \forall \ell=1, \cdots, m-1, \quad j=1, \cdots, m-1.
\end{align*}
To sum up, we have
\begin{align*}
    H = -\frac{1}{\epsilon}\begin{bmatrix}
        {\rm diag}(\mu) & \bar{\Gamma} \\
        \bar{\Gamma}^T & {\rm diag}(\bar{\nu})
    \end{bmatrix}.
\end{align*}
Following the formula for inverse of block matrices,
\begin{align*}
    \begin{bmatrix}
    \mathbf{A} & \mathbf{B} \\
    \mathbf{C} & \mathbf{D}
  \end{bmatrix}^{-1} = \begin{bmatrix}
     \mathbf{A}^{-1} + \mathbf{A}^{-1}\mathbf{B}(\mathbf{D} - \mathbf{CA}^{-1}\mathbf{B})^{-1}\mathbf{CA}^{-1} &
      -\mathbf{A}^{-1}\mathbf{B}(\mathbf{D} - \mathbf{CA}^{-1}\mathbf{B})^{-1} \\
    -(\mathbf{D}-\mathbf{CA}^{-1}\mathbf{B})^{-1}\mathbf{CA}^{-1} &
       (\mathbf{D} - \mathbf{CA}^{-1}\mathbf{B})^{-1}
  \end{bmatrix},
\end{align*}
denote 
\begin{align*}
    \mathcal{K} = {\rm diag}(\bar{\nu}) - \bar{\Gamma}^T ({\rm diag}(\mu))^{-1} \bar{\Gamma}.
\end{align*}
Note that $\cK$ is just a scalar for SOFT top-$k$ operator, and is a $(k-1)\times (k-1)$ matrix for sorted SOFT top-$k$ operator. Therefore computing its inverse is not expensive. Finally we have
\begin{align*}
    H^{-1} = -\epsilon \begin{bmatrix} 
    ({\rm diag}(\mu))^{-1} + ({\rm diag}(\mu))^{-1} \bar{\Gamma} \mathcal{K}^{-1} \bar{\Gamma}^T ({\rm diag}(\mu))^{-1} & - ({\rm diag}(\mu))^{-1} \bar{\Gamma} \mathcal{K}^{-1} \\
    -\mathcal{K}^{-1} \bar{\Gamma} ^T ({\rm diag}(\mu))^{-1} & \mathcal{K}^{-1}
    \end{bmatrix}.
\end{align*}
And also
\begin{align*}
    & D^{(1)}_{hij} = \frac{1}{\epsilon} \delta_{hi}\Gamma_{ij} \\
    & D^{(2)}_{\ell ij} = \frac{1}{\epsilon} \delta_{\ell j}\Gamma_{ij}.
\end{align*}
The above derivation can actually be viewed as we explicitly force $b_m=0$, i.e., no matter how $C$ changes, $b_m$ does not change. Therefore, we can treat $\frac{db_m}{dC}=\bm{0}_{n\times m}$, and we get the equation in the theorem. 
\end{proof}

After we obtain $\frac{d\xi^*}{dC}$ and $\frac{d\zeta^*}{dC}$, we can now compute $\frac{d\Gamma}{dC}$.
\begin{align*}
    \frac{d\Gamma_{h\ell}}{d C_{ij}} = \frac{d}{dC_{ij}} e^{\frac{-C_{h\ell}+a_h+b_\ell}{\epsilon}} = \frac{1}{\epsilon} \left( -\Gamma_{h\ell}\delta_{ih}\delta_{j\ell} + \Gamma_{h\ell}\frac{d\xi^*_h}{dC_{ij}} + \Gamma_{h\ell}\frac{db^*_\ell}{dC_{ij}} \right).
\end{align*}
Finally, in the back-propagation step, we can compute the gradient of the loss $L$ w.r.t. $C$,
\begin{align*}
    \frac{dL}{d C_{ij}} & = \sum_{h,\ell=1}^{n,m} \frac{dL}{d\Gamma_{h\ell}} \frac{d\Gamma_{h\ell}}{d C_{ij}} \\
    & = \frac{1}{\epsilon} \left( -\sum_{h,\ell=1}^{n,m} \frac{dL}{d\Gamma_{h\ell}}\Gamma_{h\ell}\delta_{in}\delta_{j\ell} + \sum_{h,\ell=1}^{n,m} \frac{dL}{d\Gamma_{h\ell}}\Gamma_{h\ell}\frac{d\xi^*_h}{dC_{ij}} + \sum_{h,\ell=1}^{n,m} \frac{dL}{d\Gamma_{h\ell}}\Gamma_{h\ell}\frac{db^*_\ell}{dC_{ij}} \right) \\
    & = \frac{1}{\epsilon} \left( - \frac{dL}{d\Gamma_{ij}}\Gamma_{ij} + \sum_{h,\ell=1}^{n,m} \frac{dL}{d\Gamma_{h\ell}}\Gamma_{h\ell}\frac{d\xi^*_h}{dC_{ij}} + \sum_{h,\ell=1}^{n,m} \frac{dL}{d\Gamma_{h\ell}}\Gamma_{h\ell}\frac{db^*_\ell}{dC_{ij}} \right).
\end{align*}

We summarize the above procedure for computing the gradient for sorted SOFT top-$k$ operator in Algorithm \ref{alg:topk_grad}. This naive implementation takes $\cO(n^2 k)$ complexity, which is not efficient. Therefore, we modify the algorithm using the associative law of matrix multiplications, so that the complexity is lowered to $\cO(nk)$. We summarize the modified algorithm in Algorithm \ref{alg:topk_grad2}. 

We also include the \texttt{PyTorch} implementation of the forward pass and backward pass as shown below. The code is executed by creating an instance of \texttt{TopK\_custom}, and the forward pass and the backward pass is run similar to any other \texttt{PyTorch} model.

\begin{algorithm}
\caption{\label{alg:topk_grad} Gradient for Sorted Top-$K$}
\begin{algorithmic} 
\REQUIRE $C \in \mathbb{R}^{n\times(k+1)}, \mu\in \mathbb{R}^{n}, \nu\in \mathbb{R}^{k+1}, \frac{d \mathcal{L}}{d\Gamma}\in \mathbb{R}^{n\times(k+1)}, \epsilon$
\STATE Run forward pass to get $\Gamma$
\STATE $\bar{\nu} = \nu[:-1], \bar{\Gamma} = \Gamma[:,:-1]$
\STATE $\mathcal{K} \leftarrow {\rm diag}(\bar{\nu}) - \bar{\Gamma}^T ({\rm diag}(\mu))^{-1} \bar{\Gamma}$\myCOMMENT{$\mathcal{K}\in \mathbb{R}^{k\times k}$}
\STATE $H1 \leftarrow ({\rm diag}(\mu))^{-1} + ({\rm diag}(\mu))^{-1} \bar{\Gamma} \mathcal{K}^{-1} \bar{\Gamma}^T ({\rm diag}(\mu))^{-1} $ \myCOMMENT{$H1\in \mathbb{R}^{n\times n}$}
\STATE $H2 \leftarrow - ({\rm diag}(\mu))^{-1} \bar{\Gamma} \mathcal{K}^{-1}$ \myCOMMENT{$H2\in \mathbb{R}^{n\times k}$}
\STATE $H3 \leftarrow  (H2)^T $ \myCOMMENT{$H3\in \mathbb{R}^{k\times n}$}
\STATE $H4 \leftarrow \mathcal{K}^{-1} $ \myCOMMENT{$H4\in \mathbb{R}^{k\times k}$}
\STATE Pad $H2$ to be $[n, k+1]$ in the last column with value $0$
\STATE Pad $H4$ to be $[k, k+1]$ in the last column with value $0$
\STATE $[\frac{d \xi^*}{dC}]_{hij} \leftarrow  [H1]_{hi}\Gamma_{ij}+[H2]_{hj}\Gamma_{ij}$ \myCOMMENT{$\frac{d \xi^*}{dC} \in \mathbb{R}^{n\times n\times (k+1)}$}
\STATE $[\frac{d b^*}{dC}]_{\ell ij} \leftarrow  [H3]_{\ell i}\Gamma_{ij}+[H4]_{\ell j}\Gamma_{ij}$ \myCOMMENT{$\frac{d b^*}{dC} \in \mathbb{R}^{k\times n\times (k+1)}$}
\STATE Pad $\frac{d b^*}{dC}$ to be $[k+1, n, k+1]$ with value $0$
\STATE $[\frac{d \mathcal{L}}{dC}]_{ij} \leftarrow \frac{1}{\epsilon}(-[\frac{d \mathcal{L}}{d\Gamma}]_{ij}\Gamma_{ij} + \sum_{h,\ell}[\frac{d \mathcal{L}}{d\Gamma}]_{h\ell}\Gamma_{h\ell}[\frac{d \xi^*}{dC}]_{hij} + \sum_{h,\ell}[\frac{d \mathcal{L}}{d\Gamma}]_{h\ell}\Gamma_{h\ell}[\frac{d b^*}{dC}]_{\ell ij} )$
\end{algorithmic}
\end{algorithm}

\begin{algorithm}
\caption{\label{alg:topk_grad2} Gradient for Sorted Top-$k$, with reduced memory}
\begin{algorithmic} 
\REQUIRE $C \in \mathbb{R}^{N\times(K+1)}, \mu\in \mathbb{R}^{N}, \nu\in \mathbb{R}^{K+1}, \frac{d \mathcal{L}}{d\Gamma}\in \mathbb{R}^{N\times(K+1)}, \epsilon$
\STATE Run forward pass to get $\Gamma$
\STATE $\bar{\nu} = \nu[:-1], \bar{\Gamma} = \Gamma[:,:-1]$
\STATE $\mathcal{K} \leftarrow {\rm diag}(\bar{\nu}) - \bar{\Gamma}^T ({\rm diag}(\mu))^{-1} \bar{\Gamma}$
\myCOMMENT{$\mathcal{K}\in \mathbb{R}^{K\times K}$}
\STATE $\mu'_i=\mu^{-1}_i$
\STATE $L \leftarrow ({\rm diag}(\mu))^{-1} \bar{\Gamma} \mathcal{K}^{-1}$ 
\myCOMMENT{$L\in \mathbb{R}^{N\times K}$}
\STATE $G1 \leftarrow \frac{d \mathcal{L}}{d\Gamma}\odot\Gamma$ 
\myCOMMENT{$G1 \in \mathbb{R}^{N\times K}$}
\STATE $g1 \leftarrow [G1]\bm{1}_K$, $g2 \leftarrow [G1]^T\bm{1}_N$ 
\myCOMMENT{$g1\in \mathbb{R}^{N}, g2\in \mathbb{R}^{K}$}
\STATE $G21 \leftarrow  (g1\odot \mu').expand\_dims(1)\odot \Gamma$ 
\myCOMMENT{$G21 \in \mathbb{R}^{N\times (K+1)}$}
\STATE $G22 \leftarrow ((g1)^T L \bar{\Gamma}^T \odot \mu').expand\_dims(1)\odot \Gamma $
\myCOMMENT{$G22 \in \mathbb{R}^{N\times (K+1)}$}
\STATE $G23 \leftarrow -((g1)^T L).pad\_last\_entry(0).expand\_dims(0)\odot \Gamma  $
\myCOMMENT{$G23 \in \mathbb{R}^{N\times (K+1)}$}
\STATE $G2 = G21 + G22 + G23$
\myCOMMENT{$G2 \in \mathbb{R}^{N\times (K+1)}$}
\STATE $g2 \leftarrow g2[:-1]$
\STATE $G31 \leftarrow -(L (g2)).expand\_dims(1)\odot \Gamma  $
\myCOMMENT{$G31 \in \mathbb{R}^{N\times (K+1)}$}
\STATE $G32 \leftarrow (\mathcal{K}^{-1} (g2)).pad\_last\_entry(0).expand\_dims(0)\odot \Gamma  $
\myCOMMENT{$G32 \in \mathbb{R}^{N\times (K+1)}$}
\STATE $G3 = G31 + G32$
\myCOMMENT{$G3 \in \mathbb{R}^{N\times (K+1)}$}
\STATE $\frac{d \mathcal{L}}{dC} \leftarrow \frac{1}{\epsilon}(-G1 + G2 + G3 )$
\end{algorithmic}
\end{algorithm}

\newpage
\begin{lstlisting}[
    basicstyle=\tiny, %or \small or \footnotesize etc.
]
def sinkhorn_forward(C, mu, nu, epsilon, max_iter):
    bs, n, k_ = C.size()

    v = torch.ones([bs, 1, k_])/(k_)
    G = torch.exp(-C/epsilon)
    if torch.cuda.is_available():
        v = v.cuda()

    for i in range(max_iter):
        u = mu/(G*v).sum(-1, keepdim=True)
        v = nu/(G*u).sum(-2, keepdim=True)

    Gamma = u*G*v
    return Gamma

def sinkhorn_forward_stablized(C, mu, nu, epsilon, max_iter):
    bs, n, k_ = C.size()
    k = k_-1

    f = torch.zeros([bs, n, 1])
    g = torch.zeros([bs, 1, k+1])
    if torch.cuda.is_available():
        f = f.cuda()
        g = g.cuda()

    epsilon_log_mu = epsilon*torch.log(mu)
    epsilon_log_nu = epsilon*torch.log(nu)

    def min_epsilon_row(Z, epsilon):
        return -epsilon*torch.logsumexp((-Z)/epsilon, -1, keepdim=True)
    
    def min_epsilon_col(Z, epsilon):
        return -epsilon*torch.logsumexp((-Z)/epsilon, -2, keepdim=True)

    for i in range(max_iter):
        f = min_epsilon_row(C-g, epsilon)+epsilon_log_mu
        g = min_epsilon_col(C-f, epsilon)+epsilon_log_nu
        
    Gamma = torch.exp((-C+f+g)/epsilon)
    return Gamma
    
def sinkhorn_backward(grad_output_Gamma, Gamma, mu, nu, epsilon):
    
    nu_ = nu[:,:,:-1]
    Gamma_ = Gamma[:,:,:-1]

    bs, n, k_ = Gamma.size()
    
    inv_mu = 1./(mu.view([1,-1]))  #[1, n]
    Kappa = torch.diag_embed(nu_.squeeze(-2)) \
            -torch.matmul(Gamma_.transpose(-1, -2) * inv_mu.unsqueeze(-2), Gamma_)   #[bs, k, k]
    
    inv_Kappa = torch.inverse(Kappa) #[bs, k, k]
    
    Gamma_mu = inv_mu.unsqueeze(-1)*Gamma_
    L = Gamma_mu.matmul(inv_Kappa) #[bs, n, k]
    G1 = grad_output_Gamma * Gamma #[bs, n, k+1]
    
    g1 = G1.sum(-1)
    G21 = (g1*inv_mu).unsqueeze(-1)*Gamma  #[bs, n, k+1]
    g1_L = g1.unsqueeze(-2).matmul(L)  #[bs, 1, k]
    G22 = g1_L.matmul(Gamma_mu.transpose(-1,-2)).transpose(-1,-2)*Gamma  #[bs, n, k+1]
    G23 = - F.pad(g1_L, pad=(0, 1), mode='constant', value=0)*Gamma  #[bs, n, k+1]
    G2 = G21 + G22 + G23  #[bs, n, k+1]
    
    del g1, G21, G22, G23, Gamma_mu
    
    g2 = G1.sum(-2).unsqueeze(-1) #[bs, k+1, 1]
    g2 = g2[:,:-1,:]  #[bs, k, 1]
    G31 = - L.matmul(g2)*Gamma  #[bs, n, k+1]
    G32 = F.pad(inv_Kappa.matmul(g2).transpose(-1,-2), pad=(0, 1), mode='constant', value=0)*Gamma  #[bs, n, k+1]
    G3 = G31 + G32  #[bs, n, k+1]

    grad_C = (-G1+G2+G3)/epsilon  #[bs, n, k+1]
    return grad_C

class TopKFunc(Function):
    @staticmethod
    def forward(ctx, C, mu, nu, epsilon, max_iter):
        
        with torch.no_grad():
            if epsilon>1e-2:
                Gamma = sinkhorn_forward(C, mu, nu, epsilon, max_iter)
                if bool(torch.any(Gamma!=Gamma)):
                    print('Nan appeared in Gamma, re-computing...')
                    Gamma = sinkhorn_forward_stablized(C, mu, nu, epsilon, max_iter)
            else:
                Gamma = sinkhorn_forward_stablized(C, mu, nu, epsilon, max_iter)
            ctx.save_for_backward(mu, nu, Gamma)
            ctx.epsilon = epsilon
        return Gamma

    @staticmethod
    def backward(ctx, grad_output_Gamma):
        
        epsilon = ctx.epsilon
        mu, nu, Gamma = ctx.saved_tensors
        # mu [1, n, 1]
        # nu [1, 1, k+1]
        #Gamma [bs, n, k+1]   
        with torch.no_grad():
            grad_C = sinkhorn_backward(grad_output_Gamma, Gamma, mu, nu, epsilon)
        return grad_C, None, None, None, None


class TopK_custom(torch.nn.Module):
    def __init__(self, k, epsilon=0.1, max_iter = 200):
        super(TopK_custom1, self).__init__()
        self.k = k
        self.epsilon = epsilon
        self.anchors = torch.FloatTensor([k-i for i in range(k+1)]).view([1,1, k+1])
        self.max_iter = max_iter
        
        if torch.cuda.is_available():
            self.anchors = self.anchors.cuda()

    def forward(self, scores):
        bs, n = scores.size()
        scores = scores.view([bs, n, 1])
        
        #find the -inf value and replace it with the minimum value except -inf
        scores_ = scores.clone().detach()
        max_scores = torch.max(scores_).detach()
        scores_[scores_==float('-inf')] = float('inf')
        min_scores = torch.min(scores_).detach()
        filled_value = min_scores - (max_scores-min_scores)
        mask = scores==float('-inf')
        scores = scores.masked_fill(mask, filled_value)
        
        C = (scores-self.anchors)**2
        C = C / (C.max().detach())
      
        mu = torch.ones([1, n, 1], requires_grad=False)/n
        nu = [1./n for _ in range(self.k)]
        nu.append((n-self.k)/n)
        nu = torch.FloatTensor(nu).view([1, 1, self.k+1])
        
        if torch.cuda.is_available():
            mu = mu.cuda()
            nu = nu.cuda()
            
        Gamma = TopKFunc.apply(C, mu, nu, self.epsilon, self.max_iter)
 
        A = Gamma[:,:,:self.k]*n
        
        return A, None
\end{lstlisting}

\section{Experiment Settings}
\label{sec:exp_setting}

\subsection{$k$NN}

The settings of the neural networks, the training procedure, and the number of neighbors $k$, and the tuning procedures are similar to \citet{grover2019stochastic}. The tuning o $\epsilon$ ranging from $10^{-6}$ to $10^{-2}$. Other settings are shown in Table \ref{tab:knn_setting}.

\begin{table}[h]
\centering
\caption{\label{tab:knn_setting} Parameter settings for $k$NN experiments.}
\begin{tabular}{lll}
\hline
Dataset      & MNIST  & CIFAR-10\\ \hline
$k$   & $9$ & $9$ \\
$\epsilon$    & $10^{-3}$ & $10^{-5}$ \\
Batch size of query samples    & $100$ & $100$ \\
Batch size of template samples    & $100$ & $100$ \\
Optimizer   & SGD & SGD \\
Learning rate   & $10^{-3}$ & $10^{-3}$ \\
Momentum    & $0.9$ & $0.9$ \\
Weight decay  & $5\times 10^{-4}$ & $5\times 10^{-4}$\\
Model & 2-layer convolutional network & ResNet18 \\
\hline
\end{tabular}
\vspace{-0.1in}
\end{table}
Note that $f_\theta$ is a feature extraction neural network, so that model specified in the last row of Table \ref{tab:knn_setting} does not contain the final activation layer and the linear layer.

\noindent{\bf Baselines.} In the baselines, the results of $k$NN, $k$NN+PCA, $k$NN+AE, $k$NN+NeuralSort is copied from \citet{grover2019stochastic}. The result of RelaxSubSample is copied from \citet{xie2019reparameterizable}.

The implementation of $k$NN+\citet{cuturi2019differentiable} is based on \citet{grover2019stochastic}. Specifically, the outputs of the models in \citet{cuturi2019differentiable} and \citet{grover2019stochastic} are both doubly stochastic matrices. So in the implementation of $k$NN+\citet{cuturi2019differentiable}, we adopt the algorithm in \citet{grover2019stochastic}, except that we replace the module of computing the doubly stochastic matrix to be the one in \citet{cuturi2019differentiable}. We extensively tuned $k$, $\epsilon$ and the learning rate, but cannot achieve a better score for this experiment. 

The baselines $k$NN+Softmax $k$ times, $k$NN+pretrained CNN, and CE+CNN adopts the identical neural networks as our model. We remark that the scores reported in \citet{grover2019stochastic} for CNN+CE are $99.4\%$ for MNIST and $95.1\%$ for CIFAR-10. However, our experiments \textit{using their code} cannot reproduce the reported scores: and the scores are $99.0\%$ and $90.9\%$, respectively. Therefore, the reported score for MNIST is implemented by us, and the score for CIFAR-10 is copied from \citet{he2016deep}.

\subsection{Beam Search}

\noindent{\bf Algorithm.} We now elaborate how to backtrack the predecessors $E^{(1:t),r}$ for an embedding $E^{(t+1),\ell}$, and how to compute the likelihood $\cL_{\rm s}(E^{(1:t+1),\ell})$, which we have omitted in Algorithm \ref{alg:beam_search2}. Specifically, in standard beam search algorithm, each selected token $\tilde{y}^{(t+1),\ell}$ is generated from a specific predecessor, and thus the backtracking is straightforward. In beam search with sorted SOFT top-$k$ operator, however, each computed embedding $E^{(1:t),r}$ is a weighted sum of the output from all predecessors, so that it is not corresponding to one specific predecessor. To address this difficulty, we select the predecessor for $E^{(t+1),\ell}$ with the largest weight, i.e.,
\begin{align*}
    (o,r) = \argmax_{(j,i)} A^{(t),\epsilon}_{ji,\ell}.
\end{align*}
This is a good approximation because $A^{(t), \epsilon}$ is a smoothed $0$-$1$ tensor, i.e., for each $\ell$, there is only one entry that is approximately $1$ in $A^{(t), \epsilon}_{:,:,\ell}$, while the others are approximately $0$. The likelihood is then computed as follows
\begin{align*}
    \cL_{\rm s}(E^{(1:t+1),\ell}) = \cL_{\rm s} (E^{(1:t),r})\mathbb{P}(y^{t+1}=\omega_o |\tilde{h}^{(t),r}(E^{(1:t),r})).
\end{align*}

\noindent{\bf Implementation.} The implemented model is identical to \citet{bahdanau2014neural}. Different from \citet{bahdanau2014neural}, here we also preprocess the data with \textit{byte pair encoding} \citep{sennrich2015neural}.  

We adopt beam size $5$, teacher forcing ratio $\rho=0.8$, and $\epsilon=10^{-1}$.  The training procedure is as follows: We first pretrain the model with teacher forcing training procedure. The pretraining procedure has initial learning rate $1$, learning rate decay $0.1$ starting from iteration $5\times 10^{5}$ for every $10^{5}$ iterations. We pretrain it for $10^{6}$ iterations in total. We then train the model using the combined training procedure for $10^{5}$ iterations with learning rate $0.05$.

\subsection{Top-$k$ Attention}

The settings of the baseline model on data pre-processing, model, and the training procedure, evaluation procedure is identical to https://opennmt.net/OpenNMT-py/extended.html. The settings of the proposed model only differs in that we adopt SOFT top-$k$ attention instead of the standard soft attention.

\section{Visualization of the Gradients}

In this section we visualize the computed gradient using a toy example mimicking the settings of $k$NN classification. Specifically, we input $10$ scores computed from $10$ images, i.e., $\cX=\{0,1,2, \cdots,9\}$, into the SOFT top-$k$ operator, and select the top-$3$ elements. Denote the indices of the images with the same labels as the query sample as $\cI$. Similar to $k$NN classification, we want to maximize $\sum_{i\in\cI} A^{\epsilon}_i$.

We visualize the gradient on $\cX$ with respect to this objective function in Figure \ref{fig:grad}. In Figure \ref{fig:grad_a},  $\cI$ is the same as the indices of top-$3$ scores. In this case, the gradient will push the gap between the top-$3$ scores and the rest scores even further. In Figure \ref{fig:grad_b}, $\cI$ is different from the indices of top-$3$ scores. In this case, the scores corresponding to $\cI$ are pushed to be smaller, while the others are pushed to be larger.

\begin{figure}
\centering     
\subfigure[$\cI=\{0,1,2\}$.]{\label{fig:grad_a}\includegraphics[width=0.45\textwidth]{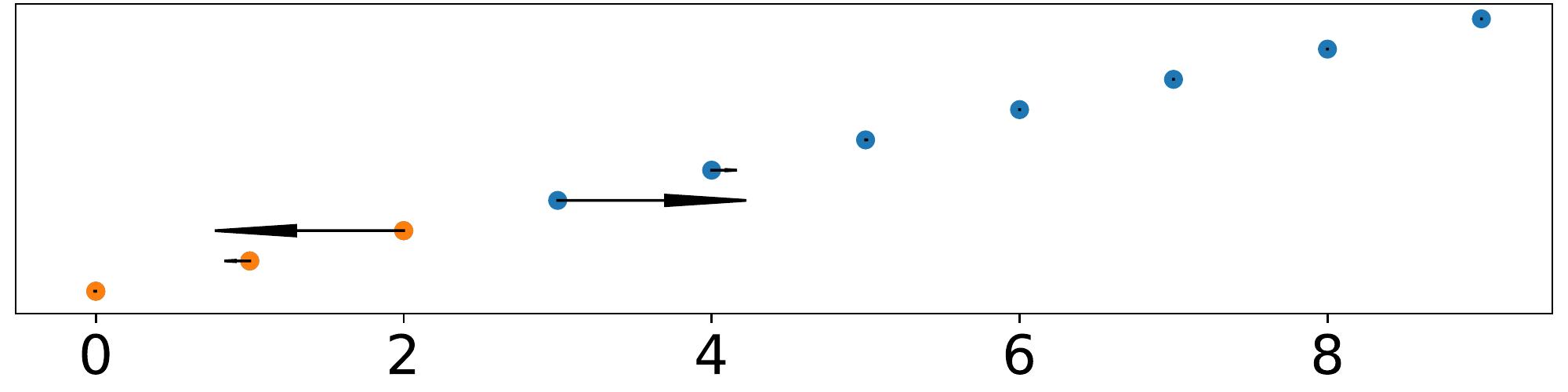}}
\subfigure[$\cI=\{2,3,4\}$.]{\label{fig:grad_b}\includegraphics[width=0.45\textwidth]{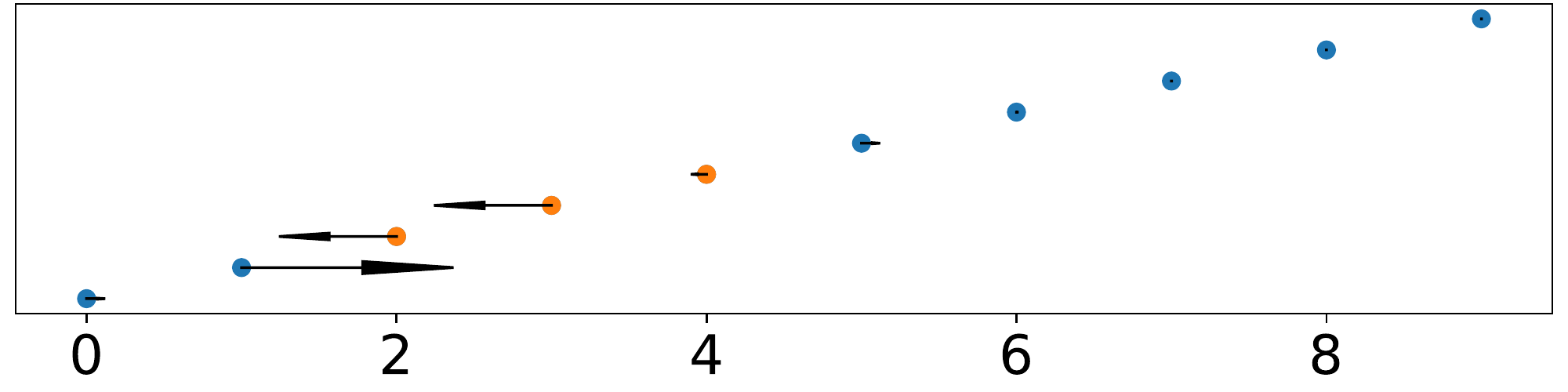}}
\vspace{-10pt}
\caption{Illustration of the gradient of the SOFT top-$k$ operators. The arrows represent the direction and magnitude of the gradient. The orange dots corresponds to the ground truth elements.}
\label{fig:grad}
\vspace{-20pt}
\end{figure}



\end{document}